\newcommand{\removelatexerror}{\let\@latex@error\@gobble}
\DeclareMathOperator*{\Argmin}{argmin}
\DeclareMathOperator*{\Max}{max}
\begin{document}
	
	\title{Task-Parameterized Imitation Learning\\with Time-Sensitive Constraints}
	
	\author{
		Julian Richter$^{1,2}$,
		João Oliveira$^{3}$,
		Christian Scheurer$^{1}$,
		Jochen Steil$^{2}$ and
		Niels Dehio$^{1}$%
		\thanks{$^{1}$Technology and Innovation Center (TIC), KUKA Deutschland GmbH, Germany, e-mail: {\tt\small Julian.Richter@kuka.com}}%
		\thanks{$^{2}$Institute of Robotics and Process Control (IRP), Technical University Braunschweig, Germany}%
		\thanks{$^{3}$IDMEC, Instituto Superior Técnico, Portugal}
		\thanks{This work has been submitted to the IEEE for possible publication. Copyright may be transferred without notice, after which this version may no longer be accessible.}%
	}%
	
	\maketitle
	

\newtheorem{theorem}{Theorem}[section]
\newtheorem{corollary}{Corollary}[theorem]
\newtheorem{lemma}[theorem]{Lemma}

\newcommand{\IndexGaussian}[1]{{
		\lambda_{#1}
}}
\newcommand{\IndexInput}[1]{{
		\eta_{#1}
}}
\newcommand{\weighting}{{
		\bm{\omega}
}}
\newcommand{\scaling}[1]{{
		\gamma_{#1}
}}

\newcommand{\denominator}{\alpha}
\newcommand{\expArg}{\beta}

\newcommand{\todo}[1]{{\color{red}{#1}}}

\newcommand{\red}[1]{{\color{red}{#1}}}
\newcommand{\green}[1]{{\color{green}{#1}}}
\newcommand{\blue}[1]{{\color{blue}{#1}}}
\newcommand{\orange}[1]{{\color{orange}{#1}}}
\newcommand{\purple}[1]{{\color{purple}{#1}}}
\newcommand{\yellow}[1]{{\color{yellow}{#1}}}
\newcommand{\algComment}[1]{{\blue{// #1}}}

\newcommand{\AlgorithmInput}{\textbf{Input:}}
\newcommand{\AlgorithmOutput}{\textbf{Output:}}

\newcommand{\inR}{\in \mathbb{R}}
\newcommand{\origin}[1]{{
		\bm{I}_{#1}
}}
\newcommand{\zero}{\bm{0}}
\newcommand{\norm}[1]{{
		\Vert #1 \Vert
}}

\newcommand{\DataSet}[1]{{
		\bm{\Upsilon}_{#1}
}}
\newcommand{\DataAll}[2]{{
		{}^{#2}_{}\bm{\xi}_{#1}
}}
\newcommand{\DataIn}{t}
\newcommand{\DataOut}{\mathbf{x}}

\newcommand{\prob}{\mathcal{P}}
\newcommand{\likelihood}{\mathcal{L}}
\newcommand{\ModelParam}[1]{{
		\bm{\Omega}_{#1}
}}
\newcommand{\Responsibility}[2]{{
		r_{#1,#2}
}}
\newcommand{\latent}[1]{{
		z_{#1}
}}

\newcommand{\gauss}{\mathcal{N}}
\newcommand{\Prior}[1]{{
		\pi_{#1}
}}
\newcommand{\Mean}[3]{{
		{}^{#3}_{}\bm{\mu}_{#1}^{#2}
}}
\newcommand{\Covariance}[3]{{
		{}^{#3}_{}\bm{\Sigma}_{#1}^{#2}
}}

\newcommand{\ExpWeight}[3]{{
		{}^{#3}_{}\hat{h}_{#1}^{#2}
}}
\newcommand{\ExpMean}[3]{{
		{}^{#3}_{}\hat{\bm{\mu}}_{#1}^{#2}
}}
\newcommand{\ExpCovariance}[3]{{
		{}^{#3}_{}\hat{\bm{\Sigma}}_{#1}^{#2}
}}
\newcommand{\ExpWeightLocal}[3]{{
		{}^{#3}_{}\tilde{h}_{#1}^{#2}
}}
\newcommand{\ExpMeanLocal}[3]{{
		{}^{#3}_{}\tilde{\bm{\mu}}_{#1}^{#2}
}}
\newcommand{\ExpCovarianceLocal}[3]{{
		{}^{#3}_{}\tilde{\bm{\Sigma}}_{#1}^{#2}
}}

\newcommand{\MeanGlobal}[3]{{
		{}^{#3}_{}\tilde{\bm{\mu}}_{#1}^{#2}
}}
\newcommand{\CovarianceGlobal}[3]{{
		{}^{#3}_{}\tilde{\bm{\Sigma}}_{#1}^{#2}
}}
\newcommand{\CovariancePT}[3]{{
		{}^{#3}_{}\bm{E}_{#1}^{#2}
}}

\newcommand{\Manifold}{{
		\mathcal{M}
}}
\newcommand{\Tangent}[1]{{
		\mathcal{T}_{#1}
}}
\newcommand{\logmap}[2]{{
		\mathbf{Log}_{#2}(#1)
}}
\newcommand{\expmap}[2]{{
		\mathbf{Exp}_{#2}(#1)
}}
\newcommand{\parallelTransport}[2]{{
		\Gamma_{#1 \mapsto #2}
}}
\newcommand{\DataManifold}[2]{{
		{}^{#2}_{}\bm{x}_{#1}
}}
\newcommand{\DataManifoldTmp}[2]{{
		{}^{#2}_{}\bm{y}_{#1}
}}
\newcommand{\DataTangent}[3]{{
		{}^{#3}_{}\bm{u}_{#1}^{#2}
}}

\newcommand{\TaskA}[1]{{
		\mathbf{A}_{#1}
}}
\newcommand{\TaskB}[1]{{
		\mathbf{b}_{#1}
}}
\newcommand{\TaskQ}[1]{{
		\mathbf{q}_{#1}
}}
	
	\begin{abstract}
Programming a robot manipulator should be as intuitive as possible.
To achieve that, the paradigm of teaching motion skills by providing few demonstrations has become widely popular in recent years.
Probabilistic versions thereof take into account the uncertainty given by the distribution of the training data.
However, precise execution of start-, via-, and end-poses at given times can not always be guaranteed.
This limits the technology transfer to industrial application.
To address this problem, 
we propose a novel constrained formulation of the Expectation Maximization algorithm 
for learning Gaussian Mixture Models (GMM) on Riemannian Manifolds.
Our approach applies to probabilistic imitation learning
and extends also to the well-established TP-GMM framework with Task-Parameterization.
It allows to prescribe end-effector poses at defined execution times,
for instance for precise pick \& place scenarios.
The probabilistic approach is compared with state-of-the-art learning-from-demonstration methods using the KUKA LBR iiwa robot.
The reader is encouraged to watch the accompanying video
available at \url{https://youtu.be/JMI1YxtN9C0}.
\end{abstract}

	\begin{IEEEkeywords}
		Learning from Demonstration,
		Task-Parameterization,
		Riemannian Manifolds,
		Constrained Motion Planning
	\end{IEEEkeywords}
	
	\section{Introduction}
\label{sec:introduction}

Robot manufacturers seek to provide easy-to-use programming interfaces,
allowing non-experts to teach robot skills intuitively.
One common aim is to empower factory-floor workers in transmitting their tool-handling skills to the robot.
This would enable small and medium-sized companies to automate repetitive processes with little effort, 
resulting in improved productivity and potential economic growth.
Underlying these efforts is research in \emph{Learning from Demonstration}~(LfD)~\cite{
	Billard_RecentAdvancesInRobotLearningFromDemonstration_2020,
	Saveriano_LearningStableRoboticSkillsOnRiemannianManifolds_2022
},
also referred to as
\emph{Programming by Demonstration}~\cite{
	Khansari_SEDS_2011,Silverio_GeneralizedTaskParameterizedSkillLearning_2018,
	Torras_IncrementalLearningInTPGMM_2016
}
or
\emph{Imitation Learning}~\cite{
	Calinon_ApproachForImitationLearningOnRiemannianManifolds_2017,
	Steil_LearningRobotMotionsWithStableDynamicalSystemsUnderDiffeomorphicTransformations_2015,
	Schaal_MovementImitationWithNonlinearDynamicalSystemsInHumanoidRobots_2002
}.

At its core, the technology requires demonstration data,
which is provided through collaborative hand-guiding~\cite{Calinon_OnLearningRepresentingAndGeneralizingTaskInHumanoidRobot_2007}, teleoperation~\cite{Calinon_ImitationLearningOfPositionalAndForceSkillsDemonstratedViaKinestethicTeachingAndHapticInput_2011},
a handheld input device~\cite{Book_TimeReductionInOnlineProgramming_WandelbotExample_2022}, 
or camera-based techniques~\cite{Vogt_SystemForLearningContinuousHumanRobotInteractionsFromHumanHumanDemonstrations_2017}.
The objective is to find suitable model parameters, 
that reproduce the demonstrations accurately.
On top of single trajectory approaches,
a \emph{Motion Skill Memory}~\cite{Steil_EfficientPolicySearchWithParameterizedSkillMemory_2014}
or \emph{Memory of Motion}~\cite{Calinon_MemoryOfMotionForWarmStartingTrajectoryOptimization_2019}
is trained for further generalization and skill interpolation.
Through the resulting model,
the robot is then controlled autonomously.
Modern techniques furthermore allow the robot behavior to adapt to  variations of the task,
often referred to as \emph{Task-Parameterization}~\cite{Calinon_TutorialTPGMM_2016}.

Many applications require additional motion properties to be guaranteed.
Such verifiable safety certificates play a fundamental role in industrial production settings.
Therefore, a wide range of research on learning state-dependent dynamical systems (DS) $\dot{\DataOut} = f(\DataOut)$
focuses on enforcing Lyapunov stability at the desired attractor~\cite{Khansari_SEDS_2011, Steil_LearningRobotMotionsWithStableDynamicalSystemsUnderDiffeomorphicTransformations_2015, Saveriano_LearningStableRoboticSkillsOnRiemannianManifolds_2022}.
Note, however, that learning a DS may suffer from the high-dimensional input dimension and the restrictive stability constraints.
For example, a globally stable first-order DS can not generate a trajectory that crosses its own path.
On the contrary, time-dependent systems $\DataOut = f(t)$
provide more flexibility in representing arbitrarily shaped trajectories~\cite{Calinon_OnLearningRepresentingAndGeneralizingTaskInHumanoidRobot_2007, Calinon_LearningBimanualEndEffectorPosesFromDemonstrationsUsingTaskParameterizedDynamicalSystems_2015, Calinon_ApproachForImitationLearningOnRiemannianManifolds_2017}.

\begin{figure} 
	\includegraphics[width=0.49\linewidth]{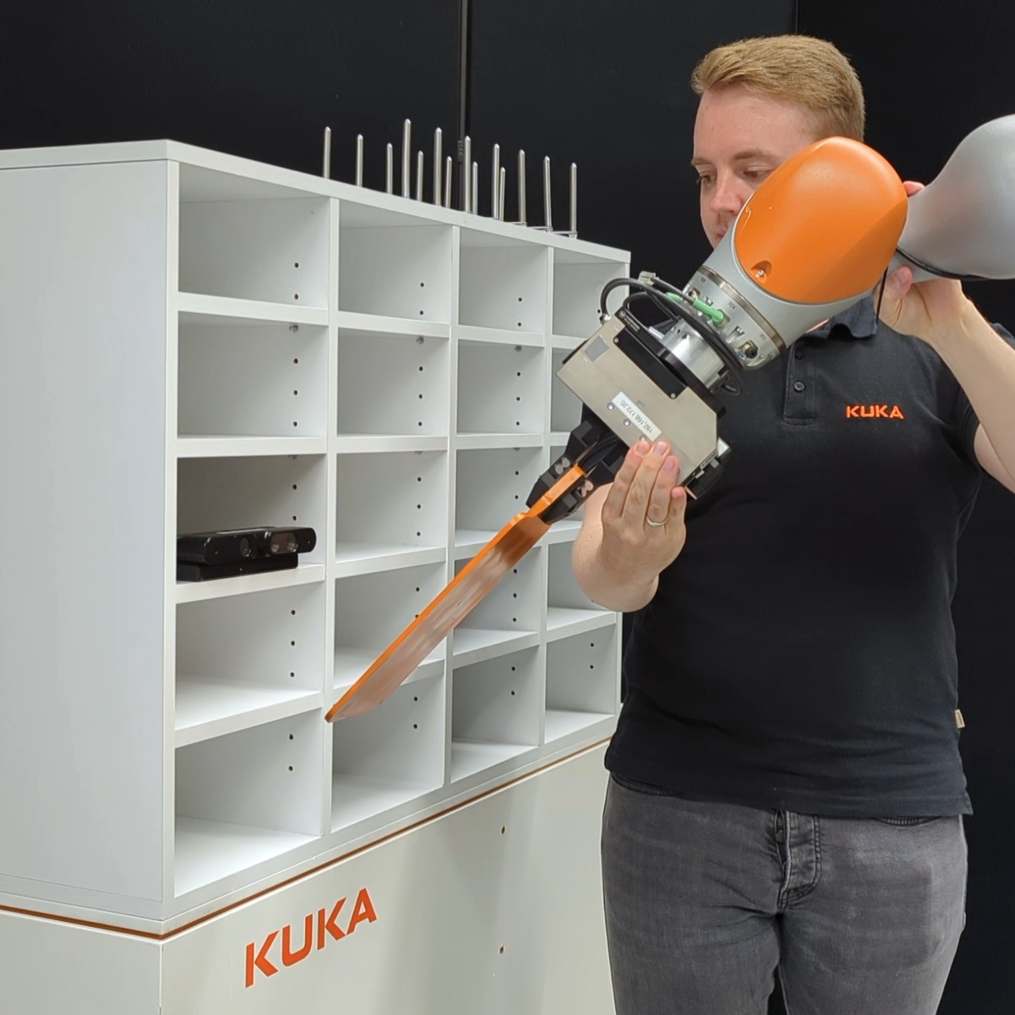}
	\hfill
	\includegraphics[width=0.49\linewidth]{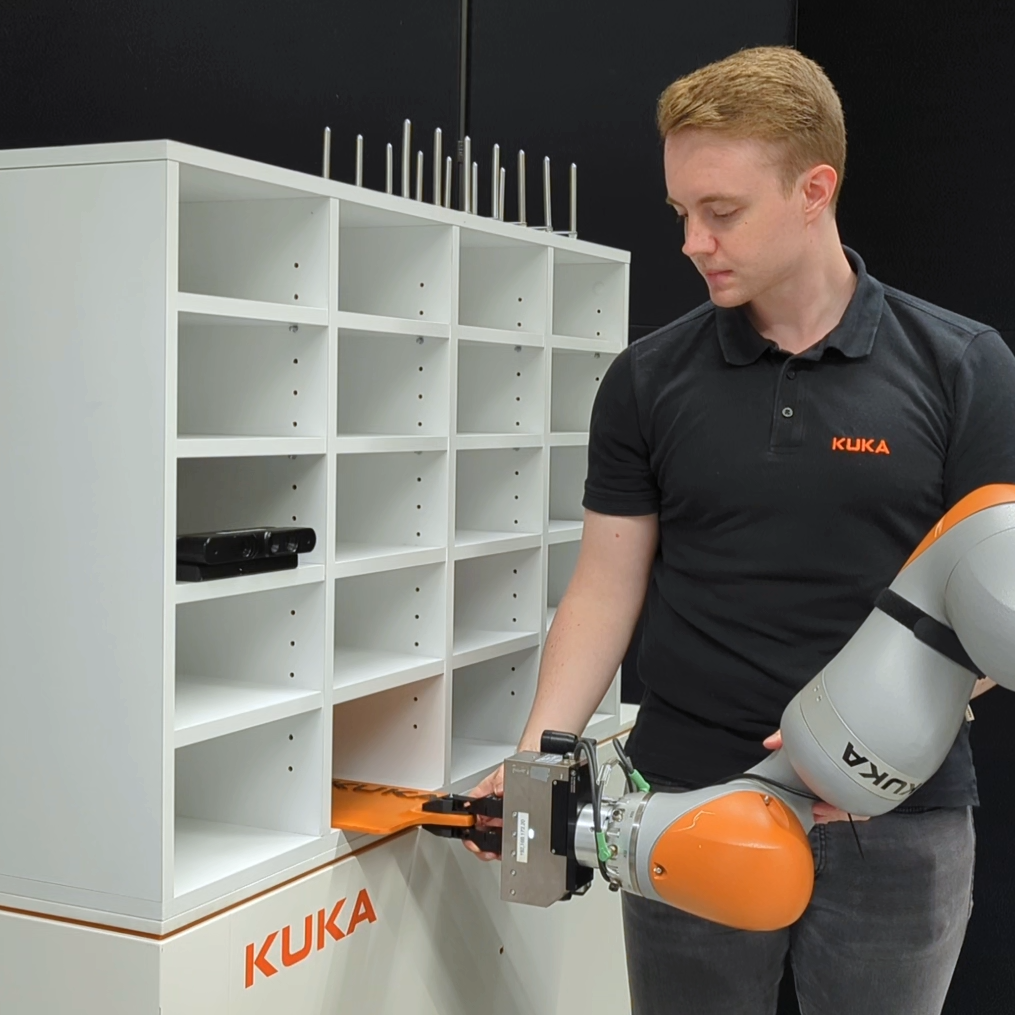}
	\caption{
		Experimental setup for a scenario with task variance.
		Demonstrations are provided through hand-guiding.
		The variable motion skill includes picking up a board out of the non-stationary rack,
		holding it in front of a camera (left)
		and inserting it into the shelf in different bins (right).
		A task-parameterized motion skill is learned without need for writing program code.
	}
	\label{fig:InspectionTaskSetup}
\end{figure}

Probabilistic methods are often used in the context of LfD~\cite{Billard_RecentAdvancesInRobotLearningFromDemonstration_2020}.
However,
state-of-the-art Gaussian Mixture Regression~(GMR) based on Expectation Maximization~(EM)~\cite{Ghahramani_SupervisedLearningFromIncompleteDataViaAnEmApproach_1993} fails 
when dealing with applications that require precise task execution at specific parts of the process. 
The main contribution of this paper is the generic formulation of a \emph{Constrained Expectation Maximization} (CEM) algorithm on Riemannian Manifolds
that enforces precision where required through prior task knowledge.
In the case of probabilistic imitation learning for time-dependent systems, 
the novel CEM enforces \emph{time-sensitive constraints} (TSC).
The approach also applies to the more generic Task-Parameterized Gaussian Mixture Model (TP-GMM)~\cite{Calinon_TutorialTPGMM_2016}.
In this letter, we demonstrate probabilistic learning of precise pick \& place motions in end-effector space, 
guaranteeing the exact reaching of desired positions and orientations.
Our approach is validated with the
KUKA LBR iiwa robot, see Fig.~\ref{fig:InspectionTaskSetup}.
Comparisons with state-of-the-art methods
reveal the advantages of our contribution.

The remainder of this work is structured as follows.
We review state-of-the-art LfD techniques in Section~\ref{sec:Background},
and reiterate GMM-based motion learning in Section~\ref{sec:Fundamentals}.
Section~\ref{sec:EnforcingTimeSensitiveConstraints} describes our main contribution on how to enforce TSC,
experimentally evaluated in Section~\ref{sec:Experiments}.
Section \ref{sec:Concolusion} concludes.

We denote scalar values with lowercase letters, e.g.~$q$,
vectors by bold lowercase letters, e.g.~$\bm{q}$ and
matrices by bold uppercase letters, e.g.~$\bm{Q}$.
Uppercase letters~$K$, $F$ and $T$ describe the number of Gaussians in a GMM, frames and inputs, respectively.
Quantities given wrt. the $f$-th local reference frame are denoted by an upper-left index, e.g.~$\Mean{}{}{f}$.
The global reference frame can be assumed if no other reference frame is indicated.
Model parameters belonging to the $k$-th component of a GMM are represented by an lower-right index, e.g.~$\Mean{k}{}{}$.

	\section{Background}
\label{sec:Background}

Over the last decades, various techniques have been developed for learning robot motions from demonstrations.
In the Dynamic Movement Primitive~(DMP) framework~\cite{
	Schaal_DynamicMovementPrimitivesFrameworkForMotorControlInHumansAndHumanoidRobots_2003,
	Billard_RecentAdvancesInRobotLearningFromDemonstration_2020},
a stable dynamical system is coupled with a learned forcing term
that encodes the desired behavior to generate point-to-point motions,
while global stability at the attractor point is guaranteed~\cite{Khansari_SEDS_2011}. Probabilistic Movement Primitives~(ProMP)~\cite{Paraschos_ProbabilisticMovementPrimitives_2013}
extend the DMP-concept and additionally capture the variation in the demonstrations.

Alternatively, the encoding of demonstration data using a GMM has become popular in recent research.
Utilizing the learned joint probability model,
GMR provides a fast and efficient way to compute a multi-variate output distribution for a given input.
The time needed for regression is independent on the number demonstrations, and hence,
robot commands are generated in an online manner~\cite{Calinon_GaussiansOnManifoldsForRobotLearning_2019}.
The GMM representation has been utilized in developing the Stable Estimator of Dynamical Systems~(SEDS)~\cite{Khansari_SEDS_2011}.
The approach introduces constraints from Lyapunov theory into the learning procedure,
thereby enforcing global stability at the desired attractor point.
Addressing the SEDS limitations resulting from the quadratic Lyapunov function,
the extended $\tau$-SEDS~\cite{Steil_LearningRobotMotionsWithStableDynamicalSystemsUnderDiffeomorphicTransformations_2015} 
applies a diffeomorphic transformation $\tau$ to the demonstration data.
An alternative diffeomorphism achieves better results~\cite{Perrin_FastDiffeomorphicMatchingToLearnGloballyAsymptoticallyStableNonlinearDynamicalSystems_2016}.
In~\cite{Nadia_PhysicallyConsistentBayesingNonParametericMixtureModelForDynamicalSystemLearning_2018} a Linear Parameter Varying Dynamical System (LPV-DS) was introduced that achieved better reproduction quality compared to SEDS.

Despite recent research advancing in learning globally stable dynamical systems (DS) from demonstrations,
the range of learnable robot motions remains limited for first-order DS models.
On the other hand,
time-dependent systems encode trajectories with arbitrary complexity.
A GMM representation for time-dependent systems was introduced in~\cite{Calinon_OnLearningRepresentingAndGeneralizingTaskInHumanoidRobot_2007}.
The Task-Parameterized Gaussian Mixture Model~(TP-GMM)~\cite{Calinon_OriginalTPGMM_2014} framework
is a direct extension of this approach for scenarios with task variance,
creating interpolation and extrapolation capabilities
by modeling the demonstration data from multiple local frame perspectives
that are adjusted to the current environment and fused at run-time~\cite{Calinon_ApproachForImitationLearningOnRiemannianManifolds_2017}.

Riemannian manifolds provide a convenient way to extend methods originally developed for Euclidean space~\cite{Calinon_GaussiansOnManifoldsForRobotLearning_2019}.
By considering unit quaternions as elements on the $S^{3}$-Sphere manifold,
the original GMM and TP-GMM framework have been extended for orientation learning in~\cite{Saveriano_LearningStableRoboticSkillsOnRiemannianManifolds_2022} and~\cite{Calinon_LearningBimanualEndEffectorPosesFromDemonstrationsUsingTaskParameterizedDynamicalSystems_2015,Calinon_ApproachForImitationLearningOnRiemannianManifolds_2017}.
In addition, the TP-GMM approach has been further improved by autonomous tuning of task-parameters 
and manual adjustment of their contribution during motion generation~\cite{Silverio_GeneralizedTaskParameterizedSkillLearning_2018}.
Identifying local frames that are irrelevant to the overall task was proposed in~\cite{Alizadeh_IdentifyingRelevantFramesInTPGMM_2016}.
Adding a weighting term during fusion to regard the importance of each perspective differently
based on their distance to the input
was investigated in~\cite{Zhu_LearningFromFewDemonstrationsWithFrameWeightedMotionGeneration_2023}.
However,
this method is limited when start and end-pose with their associated task-parameters are close to each other.

Enforcing desired properties such as specific start, target, and/or via-poses within motion generation utilizing GMR is not straightforward,
as observed in~\cite{Silverio_KMP_2019}.
To address this issue, a Bagging-GMM approach was introduced in~\cite{Ye_BaggingForGaussianMixtureRegressionInRobotLearningFromDemonstration},
where multiple base learners are trained on different subsets of the data.
Combining their predictions through suitable weights, desired time-sensitive constraints (TSC) are enforced on the generated motion.
Kernelized Movement Primitives (KMP)~\cite{Silverio_KMP_2019} on the other hand,
build upon a reference trajectory obtained from GMR
and allows to enforce desired target- or via-poses through trajectory modulation~\cite{Silverio_KMP_2019},
i.e., modification of the conditional distribution obtained from GMR by manually defining small covariances for these poses.
The approach was extended in~\cite{Silverio_ProbabilisticFrameworkForLearningGeometryBasedRobotManipulationSkills_2021}
to learn robot motions encapsulated in SPD matrices.
Alternatively, taking mathematical constraints into account during data-driven optimization
facilitates the encoding of prior knowledge into the learning process~\cite{Steil_ReliableIntegrationOfContinuousConstraintsIntoExtremeLearningMachines_2013}.

In this letter, we extend the EM algorithm to account for constraints.
The novel formulation enforces time-sensitive constraints during learning of the GMM.
Note that an adjustment of the E-step was investigated in~\cite{Graca_ExpectationMaximizationAndPosteriorConstraints_2007},
resulting in more precise control over data assignment to specific Gaussian components,
by enforcing prior knowledge on the posterior probability distribution.
For learning robot motion from demonstrations, however,
we propose a modification of the M-step instead.

	\section{Probabilistic Learning of Motion Skills}
\label{sec:Fundamentals}
Let us reiterate how to represent a motion skill as regression problem
based on demonstration data encoded in a GMM.


\subsection{Gaussian Mixture Model}
\label{sec:GMM}

Learning a time-dependent system requires a set of demonstrations~$\DataSet{} = \{ \DataAll{n}{} \}_{n=1}^{N} \inR^{D \times N}$ with $N$ samples.
Each sample~$\DataAll{n}{} \inR^{D}$
consists of a time input~$\DataIn \inR$
and a state output~$\DataOut \inR^{D\text{-}1}$, 
for which we will consider Cartesian positions, orientations, or full poses eventually.
The GMM consists of $K$~Gaussian components 
which are tuned to approximate the data's joint probability~\cite{McFarlane_GMRforLowCostMonitoringData}
\begin{equation}
	\label{eq:GMM}
	\prob(\DataAll{}{})
	=
	\prob(\DataIn, \DataOut)
	=
	\sum_{k=1}^{K} \Prior{k} \gauss(\DataAll{}{} |\Mean{k}{}{}, \Covariance{k}{}{})
\end{equation}
Each $k$-th Gaussian component is modeled by a mean vector $\Mean{k}{}{} \inR^{D}$, 
a symmetric positive-definite (SPD) covariance matrix $\Covariance{k}{}{} \inR^{D \times D}$,
and is weighted by a prior probability~$\Prior{k} \inR$.
All prior probabilities must sum up to one, i.e., $\sum_{k=1}^{K} \Prior{k} = 1$.
The $k$-th Gaussian probability density function (PDF) writes with the determinant $det(\Covariance{k}{}{})>0$
\begin{equation}
	\label{eq:GMM_PDF}
	\gauss(\DataAll{}{} | \Mean{k}{}{}, \Covariance{k}{}{})
	= 
	\frac{
		\exp\left(
			-\frac{1}{2}
			( \DataAll{}{} - \Mean{k}{}{} )^{T}
			\Covariance{k}{}{}^{-1}
			( \DataAll{}{} - \Mean{k}{}{} )
		\right)
	}{
		\sqrt{(2\pi)^{D} det(\Covariance{k}{}{})}
	}
	\,\,
\end{equation}


\subsection{Expectation Maximization}
\label{sec:GMM_EM}

The set of model parameters~$ \ModelParam{}= \left\{ \Prior{k}, \Mean{k}{}{}, \Covariance{k}{}{} \right\}_{k=1}^{K} $ 
for a GMM are commonly estimated by utilizing the EM algorithm,
thereby maximizing the log-likelihood for the given data~$\DataSet{}$
\begin{equation}
	\label{eq:LikelihoodFunction}
	\log \likelihood(\DataSet{} | \ModelParam{})
	=
	\log \prod_{n=1}^{N} \prob(\DataAll{n}{} )
	=
	\sum_{n=1}^{N} \log \prob(\DataAll{n}{} )
\end{equation}

\begin{subequations}
	The EM is an iterative procedure consisting of two steps.
	Starting from an initial set of parameters\footnote{
		For example, obtained from random sampling, k-means, or k-bins.
	},
	an \emph{expectation} step is applied
	to evaluate the responsibilities~$\Responsibility{k}{n} \inR$
	of each component generating the data
	\begin{gather}
		\label{eq:eq:GMM_eStep}
		\Responsibility{k}{n}
		=
		\frac{
			\Prior{k} \gauss( \DataAll{n}{} | \Mean{k}{}{}, \Covariance{k}{}{} ) 
		}{
			\sum_{i=1}^{K} \Prior{i} \gauss(  \DataAll{n}{} | \Mean{i}{}{}, \Covariance{i}{}{} ) 
		}
	\end{gather}
	These responsibilities are then used during a \emph{maximization} step to update the model parameters as follows
	\begin{gather}
		\label{eq:GMM_mStepPrior}
		\Prior{k}
		=
		\frac{
			\sum_{n=1}^{N} \Responsibility{k}{n}
		}{
			N 
		}
		\\
		\label{eq:GMM_mStepMu}
		\Mean{k}{}{}
		=
		\frac{
			\sum_{n=1}^{N} \Responsibility{k}{n} \DataAll{n}{}
		}{
			\sum_{n=1}^{N} \Responsibility{k}{n} 
		}
		\\
		\label{eq:GMM_mStepSigma}
		\Covariance{k}{}{}
		=
		\frac{
			\sum_{n=1}^{N} \Responsibility{k}{n} (\DataAll{n}{} - \Mean{k}{}{}) (\DataAll{n}{} - \Mean{k}{}{})^{T}
		}{
			\sum_{n=1}^{N} \Responsibility{k}{n}
		}
	\end{gather}
\end{subequations}
The EM procedure is repeated until convergence, with each iteration guaranteeing that the likelihood~\eqref{eq:LikelihoodFunction} is non-decreasing.


\subsection{Gaussian Mixture Regression}
\label{sec:GMM_GMR}

Gaussian Mixture Regression is utilized
to compute the most probable output~$\DataOut$ for a given input~$\DataIn$,
i.e., estimating the conditional probability distribution~\cite{McFarlane_GMRforLowCostMonitoringData}
\begin{equation}
	\label{eq:GMM_conditionalProbability}
	\prob(\DataOut | \DataIn, \Mean{}{}{}, \Covariance{}{}{})
	=
	\frac{\prob(\DataIn, \DataOut | \Mean{}{}{}, \Covariance{}{}{})}{\prob(\DataIn | \Mean{}{}{}, \Covariance{}{}{})}
	=
	\sum_{k=1}^{K} \ExpWeight{k}{}{} \gauss(\ExpMean{k}{}{}, \ExpCovariance{k}{}{})
\end{equation}
The resulting distribution is described by $K$ Gaussian components with
mean vectors~$\ExpMean{k}{}{} \inR^{D\text{-}1}$ and covariance matrices~$\ExpCovariance{k}{}{} \inR^{D\text{-}1 \times D\text{-}1}$,
where each $k$-th component's contribution is activated by~$0 \leq \ExpWeight{k}{}{} \leq 1$.
By decomposing means and covariances into input and output dimensions
\begin{equation}
	\label{eq:GMM_inputOutputDimensions}
	\begin{aligned}
		\Mean{k}{}{}
		=
		\begin{bmatrix} 
			\Mean{k}{\DataIn}{} \\
			\Mean{k}{\DataOut}{}
		\end{bmatrix}
		&& , &&
		\Covariance{k}{}{}
		=
		\begin{bmatrix} 
			\Covariance{k}{\DataIn\DataIn}{}  & \Covariance{k}{\DataIn \DataOut}{} \\
			\Covariance{k}{\DataIn\DataOut}{} & \Covariance{k}{\DataOut\DataOut}{}
		\end{bmatrix}
	\end{aligned} 
\end{equation}
the parameters $\ExpWeight{k}{}{}$, $\ExpMean{k}{}{}$ and $\ExpCovariance{k}{}{}$
are directly related to the parameters of the GMM as follows
\begin{subequations}
	\begin{alignat}{2}
		\label{eq:GMR_multimodalWeight}
		\ExpWeight{k}{}{}
		&=
		\frac{
			\Prior{k} \gauss(\DataIn | \Mean{k}{t}{}, \Covariance{k}{tt}{})
		}{
			\sum_{i=1}^{K} \Prior{i} \gauss(\DataIn | \Mean{i}{t}{}, \Covariance{i}{tt}{})
		}
		\\
		\label{eq:GMR_multimodalMean}
		\ExpMean{k}{}{}
		&=
		\Mean{k}{\DataOut}{} +
		\Covariance{k}{\DataIn\DataOut}{} \Covariance{k}{\DataIn\DataIn}{}^{-1}
		(\DataIn - \Mean{k}{\DataIn}{})
		\\
		\label{eq:GMR_multimodalSigma}
		\ExpCovariance{k}{}{}
		&=
		\Covariance{k}{\DataOut\DataOut}{} - 
		\Covariance{k}{\DataIn\DataOut}{} \Covariance{k}{\DataIn\DataIn}{}^{-1}
		\Covariance{k}{\DataIn\DataOut}{}
	\end{alignat}
\end{subequations}
Recall that $\Covariance{k}{\DataIn\DataIn}{}>0$ is a positive scalar, and hence $\Covariance{k}{\DataIn\DataIn}{}^{-1}>0$.

While~\eqref{eq:GMM_conditionalProbability} describes a multi-modal distribution,
a single Gaussian~$\gauss(\ExpMean{}{}{}, \ExpCovariance{}{}{}) \approx \prob(\DataOut | \DataIn, \Mean{}{}{}, \Covariance{}{}{})$
is often used to approximate the conditional probability distribution~\cite{Calinon_TutorialTPGMM_2016}.
Its mean vector $\ExpMean{}{}{}$ and covariance matrix $\ExpCovariance{}{}{}$ are given by
\begin{IEEEeqnarray}{r}
		\label{eq:GMR_Mean_Sigma}
		\ExpMean{}{}{}
		=
		\sum_{k=1}^{K} \ExpWeight{k}{}{} \ExpMean{k}{}{}
		\, \text{ and } \,
		\ExpCovariance{}{}{}
		=
		\sum_{k=1}^{K} \ExpWeight{k}{}{}
		(\ExpCovariance{k}{}{} + \ExpMean{k}{}{} \ExpMean{k}{}{}^{T})
		- \ExpMean{}{}{} \ExpMean{}{}{}^{T}
		\hspace{0.5cm}
\end{IEEEeqnarray}


\subsection{Generalization to Riemannian Manifold}
\label{sec:ExtensionToUnitQuaternions}

GMM and GMR are originally defined for vectors in Euclidean space,
whereas end-effector orientations are often described using unit quaternions,
which represent elements on the $S^{3}$-Sphere\footnote{
	Note that $-q$ and $q$ describe the same orientation,
	because the $S^{3}$-Sphere covers the range of all unit quaternions twice.
},
a 4-dimensional Riemannian manifold~\cite{Calinon_ApproachForImitationLearningOnRiemannianManifolds_2017}.
Therefore, the approach has been extended for the $S^{d}$-Sphere,
by considering Gaussians with mean vectors $\Mean{}{}{} \in \Manifold$ on the manifold
and covariance matrices $\Covariance{}{}{} \in \Tangent{\Mean{}{}{}}$ defined in the tangent space associated with the mean vector.
The latter describes the dispersion of the original data projected onto the tangent space \cite{Torras_3dHumanPoseTrackingPriorsUsingGeodesicMixtureModels_2017}.
Equation~\eqref{eq:GMM_PDF} is generalized as
\begin{IEEEeqnarray}{r}
		\label{eq:ManifoldGaussian}
		\gauss(\DataManifold{}{} | \Mean{k}{}{}, \Covariance{k}{}{})
		= 
		\frac{
			exp\left( 
			-\frac{1}{2}
			\logmap{\DataManifold{}{}}{\Mean{k}{}{}}^{T}
			\Covariance{k}{}{}^{-1}
			\logmap{\DataManifold{}{}}{\Mean{k}{}{}}
			\right) 
		}{
			\sqrt{(2\pi)^{D} det(\Covariance{k}{}{})}
		}
	\hspace{0.5cm}
\end{IEEEeqnarray}
where $\logmap{\DataManifold{}{}}{\Mean{}{}{}{}} \!:\! \Manifold \! \mapsto \! \Tangent{\Mean{}{}{}}$
describes the Logarithmic Map that projects an element~$\DataManifold{}{} \in \Manifold$ from the $S^{d}$-Sphere
onto the tangent space~$\Tangent{\Mean{}{}{}}$
associated with point~$\Mean{}{}{} \in \Manifold$.
The inverse operation is the Exponential Map~$\expmap{\DataTangent{}{}{}}{\Mean{}{}{}} : \Tangent{\Mean{}{}{}} \mapsto \Manifold$
and projects an element~$\DataTangent{}{}{} \in \Tangent{\Mean{}{}{}}$
from the tangent space~$\Tangent{\Mean{}{}{}}$
back onto the $S^{d}$-Sphere.
Additionally,
the parallel transport operation $\parallelTransport{\DataManifold{}{}}{\DataManifoldTmp{}{}}(\DataTangent{}{}{}) : \Tangent{\DataManifold{}{}} \mapsto \Tangent{\DataManifoldTmp{}{}}$
has to be applied,
when combining geometric information from different tangent spaces.
For more details, we refer the interested reader to~\cite{Calinon_ApproachForImitationLearningOnRiemannianManifolds_2017} and~\cite{Calinon_GaussiansOnManifoldsForRobotLearning_2019}.

\begin{subequations}

The generalized Expectation Maximization algorithm computes responsibilities $\Responsibility{k}{n}$ within the \emph{expectation} step as
\begin{equation}
	\label{eq:GMM_eStepManifold}
	\Responsibility{k}{n}
	=
	\frac{
		\Prior{k} \gauss( \logmap{\DataManifold{n}{}}{\Mean{k}{}{}} | \Mean{k}{}{}, \Covariance{k}{}{} ) 
	}{
		\sum_{i=1}^{K} \Prior{i} \gauss( \logmap{\DataManifold{n}{}}{\Mean{i}{}{}} | \Mean{i}{}{}, \Covariance{i}{}{} ) 
	}
\end{equation}
and updates means~$\Mean{k}{}{}$ through few Gauss-Newton iterations
\begin{equation}
	\label{eq:GMM_mStepManifoldMean}
	\DataTangent{k}{}{}
	=
	\frac{
		\sum_{n=1}^{N} \Responsibility{k}{n} \logmap{\DataManifold{n}{}}{\Mean{k}{}{}}
	}{
		\sum_{n=1}^{N} \Responsibility{k}{n} 
	}
	\,\,\,\,  ,  \,\,\,\,
	\Mean{k}{}{} \leftarrow \expmap{\DataTangent{k}{}{}}{\Mean{k}{}{}}
\end{equation}
After convergence,
the new covariance matrix is obtained by projecting all samples~$\DataManifold{}{}$ from $\DataSet{}$ onto the tangent space~$\Tangent{\Mean{k}{}{}}$
\begin{equation}
	\label{eq:GMM_mStepManifoldCovariance}
	\Covariance{k}{}{}
	=
	\frac{
		\sum_{n=1}^{N} \Responsibility{k}{n}
		\logmap{\DataManifold{n}{}}{\Mean{k}{}{}}
		\logmap{\DataManifold{n}{}}{\Mean{k}{}{}}^{T}
	}{
		\sum_{n=1}^{N} \Responsibility{k}{n}
	}
\end{equation}
Prior probabilities~$\Prior{k}$ are computed according to \eqref{eq:GMM_mStepPrior}.
\end{subequations}

\begin{subequations}
The generalized GMR utilizes
$\CovariancePT{k}{}{} = \parallelTransport{\Mean{k}{}{}}{\ExpMean{}{}{}}(\Covariance{k}{}{})$
which describes the covariance~$\Covariance{k}{}{}$
that has been parallel transported from $\Tangent{\Mean{k}{}{}}$ to $\Tangent{\ExpMean{}{}{}}$,
in order to compensate for the relative rotation between tangent spaces.
Activation weights~$\ExpWeight{k}{}{}$ are given by~\eqref{eq:GMR_multimodalWeight}.
To compute the most probable output~$\ExpMean{}{}{}$ for a given input,
each Gaussian's prediction $\DataTangent{k}{}{}$ is locally evaluated in the tangent space of~$\ExpMean{}{}{}$
\begin{equation}
	\label{eq:GmrMeanManifoldMultimodal}
	\DataTangent{k}{}{}
	=
	\logmap{\Mean{k}{\DataOut}{}}{\ExpMean{}{}{}}
	+
	\CovariancePT{k}{\DataOut \DataIn}{}
	\CovariancePT{k}{\DataIn \DataIn}{}^{-1}
	\left( \DataIn - \Mean{k}{\DataIn}{} \right)
\end{equation}
which is updated through few Gauss-Newton iterations
\begin{equation}
	\label{eq:GmrMeanManifoldLocal}
	\DataTangent{}{}{}
	=
	\sum_{k=1}^{K} \ExpWeight{k}{}{} \DataTangent{k}{}{}
	\quad , \quad
	\ExpMean{}{}{} \leftarrow \expmap{\DataTangent{}{}{}}{\ExpMean{}{}{}}
\end{equation}
After convergence, the $k$-th covariance matrix~$\ExpCovariance{k}{}{}$ is computed locally in the tangent space of~$\ExpMean{}{}{}$
\begin{equation}
	\label{eq:GmrCovarianceManifoldMultimodal}
	\ExpCovariance{k}{}{}
	=
	\CovariancePT{k}{\DataOut \DataOut}{}
	-
	\CovariancePT{k}{\DataOut \DataIn}{}
	\CovariancePT{k}{\DataIn \DataIn}{}^{-1}
	\CovariancePT{k}{\DataIn \DataOut}{}
	\\
\end{equation}
enabling us to approximate~$\ExpCovariance{}{}{}$
\begin{equation}
	\label{eq:GmrCovarianceManifoldLocal}
	\ExpCovariance{}{}{}
	=
	\sum_{k=1}^{K}
	\ExpWeight{k}{}{}
	\left( \ExpCovariance{k}{}{} + \DataTangent{k}{}{} \DataTangent{k}{}{}^{T} \right)
	-
	\DataTangent{}{}{}
	\DataTangent{}{}{}^{T}
\end{equation}
\end{subequations}

This generalized approach does not only apply to unit quaternions on the $S^{3}$-Sphere, 
but also to other Riemannian manifolds~\cite{Jaquier_GaussianMixtureRegressionOnSPDMatrices_2017}.
We obtain the Euclidean case when choosing an identity mapping as logarithmic and exponential map.
Then, the Gauss-Newton iterations converge instantly and the parallel transport operation is not necessary.


\subsection{Task-Parameterization}
\label{sec:TPGMM}

When learning robot motions from demonstrations,
projection of data onto a suitable reference frame can significantly improve the learning result.
The so-called Task-Parameterized Gaussian Mixture Model (TP-GMM) framework developed by Calinon~\cite{Calinon_TutorialTPGMM_2016} 
exploits $F>1$ reference frames to encode different parts of a variable skill.
These frames are referred to as task-parameters $\{ \TaskA{f}, \TaskB{f} \}_{f=1}^{F}$,
specified by a position~$\TaskB{f}$ with an orientation~$\TaskA{f}$,
and describe a particular task scenario.
They allow projection of the demonstration data~$\DataSet{}$ from the global frame onto different local viewpoints~$\DataSet{f}$.
Local GMMs are trained, and model parameters become $\{ \Prior{k}, \{ \Mean{k}{}{f} , \Covariance{k}{}{f} \}_{f=1}^{F} \}_{k=1}^{K}$,
under the assumption that all data samples were generated from the same source~\cite{Calinon_TutorialTPGMM_2016}.

During run-time, 
new task-parameters that describe the current scenario are supplied.
Then, for each local perspective $f \in \{1,\cdots,F\}$,
predictions $\gauss(\DataIn_{n} | \ExpMeanLocal{n}{}{f} , \ExpCovarianceLocal{n}{}{f})$
are computed
according to~\eqref{eq:GmrMeanManifoldMultimodal}-\eqref{eq:GmrCovarianceManifoldLocal}
for all time inputs $\DataIn_{n} \in \{ \DataIn_{1}, \cdots, \DataIn_{T} \}$.
Utilizing the task-parameters,
these predictions are projected into the
global perspective~$\gauss(\DataIn_{n} | \ExpMean{n}{}{f} , \ExpCovariance{n}{}{f})$ 
according to
\begin{subequations}
	\label{eq:TPGMM_ManifoldLocalToGlobal}
	\begin{alignat}{2}
		\ExpMean{n}{}{f}
		&=
		\expmap{ \logmap{\ExpMeanLocal{n}{}{f}}{\origin{}} }{\TaskA{f}}
		\\
		\ExpCovariance{n}{}{f}
		&=
		\parallelTransport{\ExpMeanLocal{n}{}{f}}{\ExpMean{n}{}{f}}(\ExpCovarianceLocal{n}{}{f})
	\end{alignat}
\end{subequations}
where $\origin{}$ describes the identity orientation.

\begin{subequations}
With all predictions being expressed in the same global reference frame,
the information from each perspective is combined using the Product of Gaussians.
Means~$\ExpMean{n}{}{}$ are computed using Gauss-Newton iterations according to
\begin{gather}
	\label{eq:TPGMM_ManifoldProductOfGaussiansSumCovariance}
	\CovariancePT{n}{\Sigma}{}
	= 
	\sum_{f=1}^{F} \left( \CovariancePT{n}{}{f}^{-1} \right)
	\\
	\label{eq:TPGMM_ManifoldProductOfGaussiansSumMean}
	\DataTangent{n}{\Sigma}{}
	=
	\sum_{f=1}^{F} \left( \CovariancePT{n}{}{f}^{-1} \logmap{\ExpMean{n}{}{f}}{\ExpMean{n}{}{}} \right)
	\\
	\label{eq:TPGMM_ManifoldProductOfGaussiansMean}
	\DataTangent{n}{}{}
	=
	\left( \CovariancePT{n}{\Sigma}{} \right)^{-1}
	\DataTangent{n}{\Sigma}{}
	\quad , \quad
	\ExpMean{n}{}{} \leftarrow \expmap{\DataTangent{n}{}{}}{\ExpMean{n}{}{}}
\end{gather}
where $\CovariancePT{n}{}{f} = \parallelTransport{\ExpMean{n}{}{f}}{\ExpMean{n}{}{}}(\ExpCovariance{n}{}{f})$
describes covariance~$\ExpCovariance{n}{}{f}$,
which has been parallel transported from $\Tangent{\ExpMean{n}{}{f}}$ to $\Tangent{\ExpMean{n}{}{}}$.
After convergence, covariances~$\ExpCovariance{n}{}{}$ are computed as 
$\ExpCovariance{n}{}{} = ( \CovariancePT{n}{\Sigma}{} )^{-1}$.
\end{subequations}
The result is a single prediction~$\gauss(\DataIn_{n} | \ExpMean{n}{}{}, \ExpCovariance{n}{}{})$
in the global perspective
for each input $\DataIn_{n} \in \{ \DataIn_{1}, \cdots, \DataIn_{T} \}$

	\section{Enforcing Time-Sensitive Constraints}
\label{sec:EnforcingTimeSensitiveConstraints}

In many industrial applications,
an end-effector trajectory should start and stop exactly at predefined times at the desired start and target pose.
However, this can not be guaranteed when learning a time-dependent system utilizing standard GMM and GMR.
Therefore, we propose to include task knowledge into the learning process
by enforcing time-sensitive constraints (TSC).

\subsection{Learning a GMM subject to Constraints}
\label{sec:TSCinGMM}

To enforce a desired pose~$\DataOut_{\text{des}}$ at time~$\DataIn_{\text{des}}$,
we investigate the single-Gaussian approximation of the conditional probability distribution.
According to~\eqref{eq:GMR_Mean_Sigma}, 
the approximated mean $\ExpMean{}{}{}$ results from the sum
of each Gaussian's prediction~$\ExpMean{k}{}{}$
multiplied by an activation weight~$\ExpWeight{k}{}{}$.
Therefore,
we aim for a model with conditions
\begin{subequations}
	\label{eq:desired_properties}
	\begin{equation}
		\label{eq:desired_properties_mean}
		\ExpMean{\IndexGaussian{}}{}{} = \DataOut_{\text{des}}
	\end{equation}
	\begin{equation}
		\label{eq:desired_properties_activation}
		\ExpWeight{\IndexGaussian{}}{}{} \approx 1
		\quad\text{,}\quad
		\ExpWeight{k}{}{} \approx 0, \forall k \neq \IndexGaussian{}
	\end{equation}
\end{subequations}
where $\IndexGaussian{} \in \{1, \cdots, K\}$
describes the index of the Gaussian component that is used to enforce the desired pose.
Before describing our novel approach, we will discuss relevant lemmas.
\begin{lemma}
\label{lemma:constrained_mean}
	If $\Mean{\IndexGaussian{}}{\DataOut}{} = \DataOut_{\text{des}}$
	and $\Mean{\IndexGaussian{}}{\DataIn}{} = \DataIn_{\text{des}}$,
	then $\ExpMean{\IndexGaussian{}}{}{} = \DataOut_{\text{des}}$
	when querying the model for the input $\DataIn = \DataIn_{\text{des}}$.
\end{lemma}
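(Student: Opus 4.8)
The plan is to read the claim directly off the per-component regression mean~\eqref{eq:GMR_multimodalMean}. First I would instantiate that formula at the component index $k=\IndexGaussian{}$ and evaluate it at the queried input $\DataIn=\DataIn_{\text{des}}$, which reads
\begin{equation*}
	\ExpMean{\IndexGaussian{}}{}{}
	=
	\Mean{\IndexGaussian{}}{\DataOut}{}
	+
	\Covariance{\IndexGaussian{}}{\DataIn\DataOut}{}
	\Covariance{\IndexGaussian{}}{\DataIn\DataIn}{}^{-1}
	\left( \DataIn_{\text{des}} - \Mean{\IndexGaussian{}}{\DataIn}{} \right).
\end{equation*}
The inverse $\Covariance{\IndexGaussian{}}{\DataIn\DataIn}{}^{-1}$ exists because $\Covariance{k}{\DataIn\DataIn}{}>0$, as noted after~\eqref{eq:GMR_multimodalSigma}, so the right-hand side is well defined.

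The decisive observation is that the regression correction is scaled by the scalar factor $\DataIn_{\text{des}} - \Mean{\IndexGaussian{}}{\DataIn}{}$. Substituting the hypothesis $\Mean{\IndexGaussian{}}{\DataIn}{}=\DataIn_{\text{des}}$ makes this factor vanish, so the whole correction term drops out regardless of the cross-covariance $\Covariance{\IndexGaussian{}}{\DataIn\DataOut}{}$. What survives is $\ExpMean{\IndexGaussian{}}{}{}=\Mean{\IndexGaussian{}}{\DataOut}{}$, and the second hypothesis $\Mean{\IndexGaussian{}}{\DataOut}{}=\DataOut_{\text{des}}$ immediately gives $\ExpMean{\IndexGaussian{}}{}{}=\DataOut_{\text{des}}$.

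For the Riemannian case I would run the same argument on~\eqref{eq:GmrMeanManifoldMultimodal}: the tangent-space correction carries the identical factor $\DataIn_{\text{des}}-\Mean{\IndexGaussian{}}{\DataIn}{}=0$, so the per-component tangent prediction collapses to $\DataTangent{\IndexGaussian{}}{}{}=\logmap{\Mean{\IndexGaussian{}}{\DataOut}{}}{\ExpMean{}{}{}}$. Recovering the manifold point via $\ExpMean{\IndexGaussian{}}{}{}=\expmap{\DataTangent{\IndexGaussian{}}{}{}}{\ExpMean{}{}{}}$ and using that the exponential map inverts the logarithm at the common base point $\ExpMean{}{}{}$ then returns $\Mean{\IndexGaussian{}}{\DataOut}{}=\DataOut_{\text{des}}$.

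I do not expect a genuine obstacle: the statement is a one-line cancellation in the GMR mean, driven by the fact that querying exactly at the component's input mean zeroes the conditioning offset. The only points needing care are confirming that the cross-covariance product is well posed through positivity of $\Covariance{\IndexGaussian{}}{\DataIn\DataIn}{}$, and, in the manifold setting, correctly invoking the $\mathbf{Log}$/$\mathbf{Exp}$ inverse relation so that the surviving logarithmic term maps back exactly to the desired pose.
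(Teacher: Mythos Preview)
Your proposal is correct and follows exactly the paper's approach: instantiate~\eqref{eq:GMR_multimodalMean} at $k=\IndexGaussian{}$ and $\DataIn=\DataIn_{\text{des}}$, observe that the correction term vanishes, and read off $\ExpMean{\IndexGaussian{}}{}{}=\DataOut_{\text{des}}$. Your additional remarks on the invertibility of $\Covariance{\IndexGaussian{}}{\DataIn\DataIn}{}$ and the Riemannian variant go beyond what the paper spells out, but are consistent with it.
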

\begin{proof}
	This is trivial through~\eqref{eq:GMR_multimodalMean}, i.e., for the $\IndexGaussian{}$-th Gaussian it yields
	$
	\ExpMean{\IndexGaussian{}}{}{}
			=
			\Mean{\IndexGaussian{}}{\DataOut}{} +
			\Covariance{\IndexGaussian{}}{\DataIn\DataOut}{} \Covariance{\IndexGaussian{}}{\DataIn\DataIn}{}^{-1}
			(\DataIn_{\text{des}} - \Mean{\IndexGaussian{}}{\DataIn}{})
			= \DataOut_{\text{des}}
	$.
\end{proof}

\begin{lemma}
\label{lemma:condition_activation_ratio}
	The $\IndexGaussian{}$-th component is activated with
	$\ExpWeight{\IndexGaussian{}}{}{} \approx 1$,
	$\ExpWeight{k}{}{} \approx 0, \forall k \neq \IndexGaussian{}$,
	if
	$
		\gauss(\DataIn | \Mean{\IndexGaussian{}}{\DataIn}{} , \Covariance{\IndexGaussian{}}{\DataIn\DataIn}{})
		\gg
		\gauss(\DataIn | \Mean{k}{\DataIn}{} , \Covariance{k}{\DataIn\DataIn}{}), \forall k \neq \IndexGaussian{}
	$.
\end{lemma}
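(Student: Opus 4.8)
The plan is to argue directly from the definition of the activation weight in~\eqref{eq:GMR_multimodalWeight}, showing that the hypothesized dominance of the $\IndexGaussian{}$-th Gaussian density at the query input propagates unchanged to the normalized weights. First I would write~\eqref{eq:GMR_multimodalWeight} out for the two cases $k = \IndexGaussian{}$ and an arbitrary $k \neq \IndexGaussian{}$, emphasizing that both expressions share the identical denominator $\sum_{i=1}^{K} \Prior{i} \gauss(\DataIn | \Mean{i}{\DataIn}{} , \Covariance{i}{\DataIn\DataIn}{})$. This reduces the whole claim to understanding how a single normalizing sum behaves when one of its summands dwarfs the rest.

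The key manipulation is to divide numerator and denominator of each weight by the dominant term $\gauss(\DataIn | \Mean{\IndexGaussian{}}{\DataIn}{} , \Covariance{\IndexGaussian{}}{\DataIn\DataIn}{})$, and to introduce the ratios $\rho_i = \gauss(\DataIn | \Mean{i}{\DataIn}{} , \Covariance{i}{\DataIn\DataIn}{}) / \gauss(\DataIn | \Mean{\IndexGaussian{}}{\DataIn}{} , \Covariance{\IndexGaussian{}}{\DataIn\DataIn}{})$, each of which is negligibly small for $i \neq \IndexGaussian{}$ by the hypothesis $\gauss(\DataIn | \Mean{\IndexGaussian{}}{\DataIn}{} , \Covariance{\IndexGaussian{}}{\DataIn\DataIn}{}) \gg \gauss(\DataIn | \Mean{k}{\DataIn}{} , \Covariance{k}{\DataIn\DataIn}{})$. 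The shared denominator then collapses to $\Prior{\IndexGaussian{}} + \sum_{i \neq \IndexGaussian{}} \Prior{i}\,\rho_i \approx \Prior{\IndexGaussian{}}$. For $k = \IndexGaussian{}$ the numerator reduces to $\Prior{\IndexGaussian{}}$, so $\ExpWeight{\IndexGaussian{}}{}{} \approx 1$; for $k \neq \IndexGaussian{}$ the numerator reduces to $\Prior{k}\,\rho_k$, which is negligible, so $\ExpWeight{k}{}{} \approx 0$ follows at once.

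I expect the main subtlety to lie in the priors $\Prior{i}$, since the hypothesis constrains only the Gaussian densities and is silent about the mixing coefficients. To make the argument airtight I would read the informal $\gg$ as the precise statement that each ratio $\rho_i \to 0$, and note that the priors are fixed finite constants obeying $\sum_{i} \Prior{i} = 1$ with $\Prior{\IndexGaussian{}} > 0$. Consequently every product $\Prior{i}\,\rho_i$ is a bounded constant times a vanishing quantity, hence itself vanishes relative to $\Prior{\IndexGaussian{}}$, which closes both estimates. The only degenerate case to exclude is $\Prior{\IndexGaussian{}} = 0$, which would force $\ExpWeight{\IndexGaussian{}}{}{} = 0$ irrespective of the densities; the remaining steps are routine algebra and need no further elaboration.
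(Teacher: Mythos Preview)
Your proposal is correct and follows essentially the same argument as the paper: both approximate the common denominator in~\eqref{eq:GMR_multimodalWeight} by its dominant $\IndexGaussian{}$-th summand, yielding $\ExpWeight{\IndexGaussian{}}{}{} \approx \Prior{\IndexGaussian{}}/\Prior{\IndexGaussian{}} = 1$ and $\ExpWeight{k}{}{} \approx \Prior{k}\gauss(\cdot)/\Prior{\IndexGaussian{}}\gauss(\cdot) \approx 0$. Your explicit introduction of the ratios $\rho_i$ and the remark on the degenerate case $\Prior{\IndexGaussian{}} = 0$ make the informal $\gg$ step slightly more transparent than the paper's version, but the route is the same.
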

\begin{proof}
	Given
	$
	\gauss(\DataIn | \Mean{\IndexGaussian{}}{\DataIn}{} , \Covariance{\IndexGaussian{}}{\DataIn\DataIn}{})
	\gg
	\gauss(\DataIn | \Mean{k}{\DataIn}{} , \Covariance{k}{\DataIn\DataIn}{}), \forall k \neq \IndexGaussian{}
	$,
	the ratio in~\eqref{eq:GMR_multimodalWeight} for determining the activation weight $\ExpWeight{\IndexGaussian{}}{}{}$
	is dominated by $\gauss(\DataIn | \Mean{\IndexGaussian{}}{\DataIn}{} , \Covariance{\IndexGaussian{}}{\DataIn\DataIn}{})$.
	For the $\IndexGaussian{}$-th component holds
	\begin{equation*}
		\ExpWeight{\IndexGaussian{}}{}{}
		=
		\frac
		{
			\Prior{\IndexGaussian{}}
			\gauss( 
			\DataIn |
			\Mean{\IndexGaussian{}}{\DataIn}{} ,
			\Covariance{\IndexGaussian{}}{\DataIn \DataIn}{}
			)
		}{
			\sum_{i=1}^{K}
			\Prior{i}
			\gauss( 
			\DataIn |
			\Mean{i}{\DataIn}{} ,
			\Covariance{i}{\DataIn \DataIn}{}
			)
		}
		\approx
		\frac
		{
			\Prior{\IndexGaussian{}}
			\gauss( 
			\DataIn |
			\Mean{\IndexGaussian{}}{\DataIn}{} ,
			\Covariance{\IndexGaussian{}}{\DataIn \DataIn}{}
			)
		}{
			\Prior{\IndexGaussian{}}
			\gauss( 
			\DataIn |
			\Mean{\IndexGaussian{}}{\DataIn}{} ,
			\Covariance{\IndexGaussian{}}{\DataIn \DataIn}{}
			)
		}
		\approx
		1
	\end{equation*}
	and for all other components $k \neq \IndexGaussian{}$ holds
	\begin{equation*}
		\ExpWeight{k}{}{}
		=
		\frac
		{
			\Prior{k}
			\gauss( 
			\DataIn |
			\Mean{k}{\DataIn}{} ,
			\Covariance{k}{\DataIn \DataIn}{}
			)
		}{
			\sum_{i=1}^{K}
			\Prior{i}
			\gauss( 
			\DataIn |
			\Mean{i}{\DataIn}{} ,
			\Covariance{i}{\DataIn \DataIn}{}
			)
		}
		\approx
		\frac
		{
			\Prior{k}
			\gauss( 
			\DataIn |
			\Mean{k}{\DataIn}{} ,
			\Covariance{k}{\DataIn \DataIn}{}
			)
		}{
			\Prior{\IndexGaussian{}}
			\gauss( 
			\DataIn |
			\Mean{\IndexGaussian{}}{\DataIn}{} ,
			\Covariance{\IndexGaussian{}}{\DataIn \DataIn}{}
			)
		}
		\approx
		0
		\qedhere
	\end{equation*}
\end{proof}
	
In the following, let us denote
\begin{equation}
\label{eq:newPDE}
	\gauss(\DataIn | \Mean{k}{\DataIn}{}, \Covariance{k}{\DataIn\DataIn}{})
	= 
	\frac{ 1 }{ \denominator_{k} } \exp\left( \expArg_{k} \right) \inR
\end{equation}
with a constant factor
	$
		\denominator_{k}
		=
		\sqrt{2 \pi \det( \Covariance{k}{\DataIn\DataIn}{} ) }
	$
and the exponential argument
	$
		\expArg_{k}
		=
		-\frac{1}{2}
		( \DataIn - \Mean{k}{\DataIn}{} )^{T}
		\Covariance{k}{\DataIn\DataIn}{}^{-1}
		( \DataIn - \Mean{k}{\DataIn}{} )
	$.
Recall that the covariance $\Covariance{k}{}{}$ is a SPD matrix,
i.e., $\det(\Covariance{k}{}{}) > 0$,
hence also $\det( \Covariance{k}{\DataIn\DataIn}{} ) > 0$ and $\denominator_{k} >0$.
The scalar $\expArg_{k}$ constitutes a negative squared distance
between input $\DataIn$ and mean $\Mean{k}{\DataIn}{}$
(scaled by $\frac{1}{2}\Covariance{k}{\DataIn\DataIn}{}^{-1} > 0$).
It holds $\expArg_{k} \leq 0$
and $0 < \exp(\expArg_{k}) \leq 1$.

\begin{lemma}
	\label{lemma:condition_PDF_maximized}
	If $\Mean{\IndexGaussian{}}{\DataOut}{} = \DataOut_{\text{des}}$
	and $\Mean{\IndexGaussian{}}{\DataIn}{} = \DataIn_{\text{des}}$
	(see Lemma~\ref{lemma:constrained_mean}),
	then $\gauss(\DataIn | \Mean{\IndexGaussian{}}{\DataIn}{} , \Covariance{\IndexGaussian{}}{\DataIn\DataIn}{})$
	reaches its maximum at $\DataIn =  \DataIn_{\text{des}}$.
\end{lemma}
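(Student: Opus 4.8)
The plan is to reduce the claim to the elementary fact that a scalar Gaussian exponent is maximized at its mean, using the decomposition already introduced in~\eqref{eq:newPDE}, namely $\gauss(\DataIn | \Mean{\IndexGaussian{}}{\DataIn}{}, \Covariance{\IndexGaussian{}}{\DataIn\DataIn}{}) = \frac{1}{\denominator_{\IndexGaussian{}}} \exp(\expArg_{\IndexGaussian{}})$. The first step is to observe that the normalizing factor $\denominator_{\IndexGaussian{}} = \sqrt{2\pi \det(\Covariance{\IndexGaussian{}}{\DataIn\DataIn}{})}$ is a strictly positive constant that is independent of the query input $\DataIn$. Consequently, maximizing the density over $\DataIn$ is equivalent to maximizing $\exp(\expArg_{\IndexGaussian{}})$, and since $\exp(\cdot)$ is strictly increasing, equivalent to maximizing the scalar argument $\expArg_{\IndexGaussian{}}$ itself.

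The second step invokes the sign analysis already stated just below~\eqref{eq:newPDE}. Because $\Covariance{\IndexGaussian{}}{\DataIn\DataIn}{}^{-1} > 0$ and $\DataIn$ is scalar, the argument $\expArg_{\IndexGaussian{}} = -\frac{1}{2}(\DataIn - \Mean{\IndexGaussian{}}{\DataIn}{})^{T} \Covariance{\IndexGaussian{}}{\DataIn\DataIn}{}^{-1} (\DataIn - \Mean{\IndexGaussian{}}{\DataIn}{})$ is a negative scaled squared distance, so $\expArg_{\IndexGaussian{}} \leq 0$ with equality if and only if $\DataIn = \Mean{\IndexGaussian{}}{\DataIn}{}$. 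Hence $\expArg_{\IndexGaussian{}}$ attains its unique maximum value $0$ exactly at $\DataIn = \Mean{\IndexGaussian{}}{\DataIn}{}$, at which point $\exp(\expArg_{\IndexGaussian{}}) = 1$ reaches the upper bound established after~\eqref{eq:newPDE}. Substituting the hypothesis $\Mean{\IndexGaussian{}}{\DataIn}{} = \DataIn_{\text{des}}$ then places this maximizer precisely at $\DataIn = \DataIn_{\text{des}}$, which is exactly the claim.

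I expect no genuine obstacle, as this is the textbook fact that a Gaussian density peaks at its mean, specialized to the scalar time dimension. The only points meriting care are to stress that $\denominator_{\IndexGaussian{}}$ does not depend on $\DataIn$, so that the maximization truly reduces to that of the exponent, and to note that the output-mean hypothesis $\Mean{\IndexGaussian{}}{\DataOut}{} = \DataOut_{\text{des}}$ plays no role in this particular claim — only the input-mean condition $\Mean{\IndexGaussian{}}{\DataIn}{} = \DataIn_{\text{des}}$ drives the result, the output condition being carried along merely because the lemma is phrased jointly with Lemma~\ref{lemma:constrained_mean}.
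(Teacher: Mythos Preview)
Your proposal is correct and follows essentially the same argument as the paper's proof: both use the decomposition~\eqref{eq:newPDE}, note that $\denominator_{\IndexGaussian{}}$ is a positive constant independent of $\DataIn$, and conclude from $\expArg_{\IndexGaussian{}} \leq 0$ with equality at $\DataIn = \Mean{\IndexGaussian{}}{\DataIn}{} = \DataIn_{\text{des}}$ that the density is maximized there. Your version is simply more explicit about the monotonicity of $\exp$ and the irrelevance of the output-mean hypothesis, but the route is the same.
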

\begin{proof}
	In the case $\DataIn = \DataIn_{\text{des}}$ holds $\expArg_{\IndexGaussian{}} = 0$ and $\exp(\expArg_{\IndexGaussian{}}) = 1$.
	Furthermore, $\denominator_{\IndexGaussian{}} >0$ is a constant positive factor.
	Therefore, $\gauss(\DataIn | \Mean{\IndexGaussian{}}{\DataIn}{} , \Covariance{\IndexGaussian{}}{\DataIn\DataIn}{})$
	has its maximum at $\DataIn = \DataIn_{\text{des}}$.
\end{proof}

\begin{lemma}
	\label{lemma:covariance_scaling}
	Assume $\Mean{\IndexGaussian{}}{\DataOut}{} = \DataOut_{\text{des}}$
	and $\Mean{\IndexGaussian{}}{\DataIn}{} = \DataIn_{\text{des}}$
	as discussed in Lemma \ref{lemma:constrained_mean}.
	Multiplying~$\Covariance{k}{\DataIn\DataIn}{}$
	in the covariance from the $k$-th Gaussian component 
	with a scaling factor $\scaling{k} \in (0,1)$
	increases the activation $\ExpWeight{\IndexGaussian{}}{}{}$ of the $\IndexGaussian{}$-th component
	at $\DataIn = \DataIn_{\text{des}}$.
\end{lemma}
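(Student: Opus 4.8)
The plan is to evaluate the activation weight \eqref{eq:GMR_multimodalWeight} at the query time $\DataIn = \DataIn_{\text{des}}$ and to track how it responds to shrinking the input covariance of the $\IndexGaussian{}$-th component, i.e.\ the one whose mean sits at $\DataIn_{\text{des}}$. The guiding idea is that narrowing a one-dimensional Gaussian raises its peak value, and because $\DataIn_{\text{des}} = \Mean{\IndexGaussian{}}{\DataIn}{}$ is exactly the peak of the $\IndexGaussian{}$-th component, the scaling inflates this single density while leaving every other component's value at $\DataIn_{\text{des}}$ untouched.

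First I would compute the $\IndexGaussian{}$-th input density at $\DataIn = \DataIn_{\text{des}}$. By the assumption $\Mean{\IndexGaussian{}}{\DataIn}{} = \DataIn_{\text{des}}$ we have $\expArg_{\IndexGaussian{}} = 0$ and $\exp(\expArg_{\IndexGaussian{}}) = 1$, exactly as in Lemma~\ref{lemma:condition_PDF_maximized}, so from \eqref{eq:newPDE} the density collapses to its prefactor, $\gauss(\DataIn_{\text{des}} | \Mean{\IndexGaussian{}}{\DataIn}{}, \Covariance{\IndexGaussian{}}{\DataIn\DataIn}{}) = 1/\denominator_{\IndexGaussian{}}$. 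Replacing $\Covariance{\IndexGaussian{}}{\DataIn\DataIn}{}$ by $\scaling{\IndexGaussian{}} \Covariance{\IndexGaussian{}}{\DataIn\DataIn}{}$ rescales the prefactor, since $\denominator_{\IndexGaussian{}} = \sqrt{2\pi \det(\Covariance{\IndexGaussian{}}{\DataIn\DataIn}{})}$ and, $\Covariance{\IndexGaussian{}}{\DataIn\DataIn}{}$ being a positive scalar, its determinant picks up exactly the factor $\scaling{\IndexGaussian{}}$; hence $\denominator_{\IndexGaussian{}} \mapsto \sqrt{\scaling{\IndexGaussian{}}}\, \denominator_{\IndexGaussian{}}$. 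Because $\scaling{\IndexGaussian{}} \in (0,1)$, the peak density grows by the factor $1/\sqrt{\scaling{\IndexGaussian{}}} > 1$.

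Next I would substitute this into \eqref{eq:GMR_multimodalWeight}. Writing $g_{\IndexGaussian{}} := \gauss(\DataIn_{\text{des}} | \Mean{\IndexGaussian{}}{\DataIn}{}, \Covariance{\IndexGaussian{}}{\DataIn\DataIn}{})$ and collecting the untouched remainder as $S := \sum_{i \neq \IndexGaussian{}} \Prior{i}\, \gauss(\DataIn_{\text{des}} | \Mean{i}{\DataIn}{}, \Covariance{i}{\DataIn\DataIn}{}) > 0$, the activation reads $\ExpWeight{\IndexGaussian{}}{}{} = \Prior{\IndexGaussian{}} g_{\IndexGaussian{}} / (\Prior{\IndexGaussian{}} g_{\IndexGaussian{}} + S)$, with $S$ independent of $\scaling{\IndexGaussian{}}$. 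The map $g \mapsto \Prior{\IndexGaussian{}} g/(\Prior{\IndexGaussian{}} g + S)$ is strictly increasing on $(0,\infty)$, as its derivative $\Prior{\IndexGaussian{}} S/(\Prior{\IndexGaussian{}} g + S)^2$ is positive whenever $\Prior{\IndexGaussian{}}, S > 0$. Since the scaled $g_{\IndexGaussian{}}$ is strictly larger, $\ExpWeight{\IndexGaussian{}}{}{}$ strictly increases, which is the claim.

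The only step that needs genuine care is the bookkeeping of the third paragraph: the denominator of \eqref{eq:GMR_multimodalWeight} must be split so that the single term that grows, $\Prior{\IndexGaussian{}} g_{\IndexGaussian{}}$, is cleanly isolated from the frozen aggregate $S$, which in turn relies on scaling \emph{only} the $\IndexGaussian{}$-th component. A brief remark is warranted that scaling a different component $i \neq \IndexGaussian{}$ would not give a monotone effect, because $\DataIn_{\text{des}}$ lies off the peak of such a component and there the prefactor growth competes with the sharper exponential decay. Everything else — the scalar nature of $\Covariance{\IndexGaussian{}}{\DataIn\DataIn}{} > 0$ and the elementary monotonicity of the above Möbius-type map — is routine.
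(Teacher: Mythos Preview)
Your argument treats the lemma as though $k=\IndexGaussian{}$, i.e.\ as if only the $\IndexGaussian{}$-th component's input variance is being shrunk. But the lemma, as stated and as the paper proves it, lets $k$ be \emph{any} index: scaling $\Covariance{k}{\DataIn\DataIn}{}$ by $\scaling{k}\in(0,1)$ is claimed to raise $\ExpWeight{\IndexGaussian{}}{}{}$ at $\DataIn=\DataIn_{\text{des}}$ regardless of whether $k=\IndexGaussian{}$ or $k\neq\IndexGaussian{}$. Accordingly the paper splits into two cases. For $k=\IndexGaussian{}$ it argues, essentially as you do, that $\expArg_{\IndexGaussian{}}=0$ so the density at $\DataIn_{\text{des}}$ equals $1/\denominator_{\IndexGaussian{}}$ and grows by the factor $1/\sqrt{\scaling{\IndexGaussian{}}}$; your explicit monotonicity of $g\mapsto \Prior{\IndexGaussian{}}g/(\Prior{\IndexGaussian{}}g+S)$ is a clean way to finish that branch. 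For $k\neq\IndexGaussian{}$ (with $\Mean{k}{\DataIn}{}\neq\DataIn_{\text{des}}$) the paper argues that $\gauss(\DataIn_{\text{des}}\mid\Mean{k}{\DataIn}{},\scaling{k}\Covariance{k}{\DataIn\DataIn}{})<\gauss(\DataIn_{\text{des}}\mid\Mean{k}{\DataIn}{},\Covariance{k}{\DataIn\DataIn}{})$, which again pushes $\ExpWeight{\IndexGaussian{}}{}{}$ upward. Your proposal omits this second case entirely; the decomposition with $S$ ``frozen'' is valid only when the component being scaled is $\IndexGaussian{}$ itself, and in your closing remark you even deny that scaling a component $i\neq\IndexGaussian{}$ produces a monotone effect. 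So relative to the lemma as written there is a genuine gap: half of the required case distinction is unaddressed.

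Your instinct in that remark is not baseless, though. The paper's $k\neq\IndexGaussian{}$ step tracks only the exponential factor (noting the scaled argument $\tfrac{1}{\scaling{k}}\expArg_{k}<\expArg_{k}<0$) and does not explicitly weigh it against the simultaneous growth of the prefactor $1/(\sqrt{\scaling{k}}\,\denominator_{k})$; whether the product actually decreases for \emph{every} $\scaling{k}\in(0,1)$ depends on how far $\DataIn_{\text{des}}$ sits from $\Mean{k}{\DataIn}{}$ in units of $\sqrt{\Covariance{k}{\DataIn\DataIn}{}}$. But that is a subtlety in the paper's argument, not a substitute for the missing case in yours: to match the lemma you still need to treat $k\neq\IndexGaussian{}$, either by reproducing the paper's reasoning or by making an additional separation hypothesis explicit.
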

\begin{proof}
	Note $(\scaling{k} \Covariance{k}{\DataIn\DataIn}{})^{-1} = \frac{1}{\scaling{k}} \Covariance{k}{\DataIn\DataIn}{}^{-1}$.
	Due to $0 \! < \! \scaling{k} \! < \! 1$ yields $\frac{1}{\scaling{k}} > 1$.
	With $\expArg_{k} \leq 0$ holds $\exp(\frac{1}{\scaling{k}} \expArg_{k}) \leq \exp(\expArg_{k})$.
	Also, we have $\det(\scaling{k} \Covariance{k}{\DataIn\DataIn}{}) = \scaling{k} \det(\Covariance{k}{\DataIn\DataIn}{})$.
	Hence, the constant factor
	$
		\frac{1}{\denominator_{k}}
	$
	in~\eqref{eq:newPDE} becomes 
	$
		\frac{1}{\sqrt{\scaling{k}} \denominator_{k}}
		>
		\frac{1}{\denominator_{k}}
	$.
	In the following, we need to distinguish two cases, $k = \IndexGaussian{}$ and $k \neq \IndexGaussian{}$.
	
	Consider first $k = \IndexGaussian{}$: 
	For $\DataIn = \DataIn_{\text{des}}$,
	the $\IndexGaussian{}$-th component has an exponential argument $\expArg_{\IndexGaussian{}} = 0$.
	Therefore, it holds
	$
		\gauss(\DataIn_{\text{des}} | \Mean{\IndexGaussian{}}{\DataIn}{} , \scaling{\IndexGaussian{}} \Covariance{\IndexGaussian{}}{\DataIn\DataIn}{})
		>
		\gauss(\DataIn_{\text{des}} | \Mean{\IndexGaussian{}}{\DataIn}{} , \Covariance{\IndexGaussian{}}{\DataIn\DataIn}{})
	$.
	
	Consider now $k \neq \IndexGaussian{}$ (and $\Mean{k}{\DataIn}{} \neq \DataIn_{\text{des}}$):
	For $\DataIn = \DataIn_{\text{des}}$
	the exponential argument $\expArg_{k} < 0$ is strictly negative and
	after scaling it yields $\scaling{k} \expArg_{k} < \expArg_{k} < 0$.
	Therefore, it holds
	$
		\gauss(\DataIn_{\text{des}} | \Mean{k}{\DataIn}{} , \scaling{k} \Covariance{k}{\DataIn\DataIn}{})
		<
		\gauss(\DataIn_{\text{des}} | \Mean{k}{\DataIn}{} , \Covariance{k}{\DataIn\DataIn}{})
		, \forall k \neq \IndexGaussian{}
	$.
	
	Thus,
	in both cases $k = \IndexGaussian{}$ and $k \neq \IndexGaussian{}$,
	the scaling with factor $\scaling{k} \in (0,1)$ causes the activation~$\ExpWeight{\IndexGaussian{}}{}{}$
	to increase at $\DataIn = \DataIn_{\text{des}}$.
\end{proof}

\begin{theorem}
	Given $\Mean{\IndexGaussian{}}{\DataOut}{} = \DataOut_{\text{des}}$
	and $\Mean{\IndexGaussian{}}{\DataIn}{} = \DataIn_{\text{des}}$,
	there exists a set of $K$ scaling factors $\scaling{k} \in (0,1]$,
	i.e. $\scaling{k} \Covariance{k}{\DataIn\DataIn}{}, \forall k \in K$,
	such that activations become
	$\ExpWeight{\IndexGaussian{}}{}{} \approx 1$,
	$\ExpWeight{k}{}{} \approx 0, \forall k \neq \IndexGaussian{}$
	when querying the model for the input $\DataIn = \DataIn_{\text{des}}$.
\end{theorem}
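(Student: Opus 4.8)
The plan is to reduce the statement to Lemma~\ref{lemma:condition_activation_ratio} and then establish its hypothesis by a limiting argument on the scaled input-PDFs evaluated at the query point $\DataIn = \DataIn_{\text{des}}$. By Lemma~\ref{lemma:condition_activation_ratio} it suffices to produce scaling factors for which the $\IndexGaussian{}$-th input-PDF dominates every other, i.e. $\gauss(\DataIn_{\text{des}} | \Mean{\IndexGaussian{}}{\DataIn}{}, \scaling{\IndexGaussian{}}\Covariance{\IndexGaussian{}}{\DataIn\DataIn}{}) \gg \gauss(\DataIn_{\text{des}} | \Mean{k}{\DataIn}{}, \scaling{k}\Covariance{k}{\DataIn\DataIn}{})$ for all $k \neq \IndexGaussian{}$; the desired activations $\ExpWeight{\IndexGaussian{}}{}{} \approx 1$ and $\ExpWeight{k}{}{} \approx 0$ then follow at once via~\eqref{eq:GMR_multimodalWeight}. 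First I would fix $\scaling{\IndexGaussian{}} = 1 \in (0,1]$, so that by Lemma~\ref{lemma:condition_PDF_maximized} the dominating component retains the fixed, strictly positive value $\gauss(\DataIn_{\text{des}} | \Mean{\IndexGaussian{}}{\DataIn}{}, \Covariance{\IndexGaussian{}}{\DataIn\DataIn}{}) = 1/\denominator_{\IndexGaussian{}}$ at $\DataIn_{\text{des}}$, independent of the remaining choices.

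Next I would treat each competing component $k \neq \IndexGaussian{}$ in isolation. Reusing the two identities from the proof of Lemma~\ref{lemma:covariance_scaling}, namely $(\scaling{k}\Covariance{k}{\DataIn\DataIn}{})^{-1} = \tfrac{1}{\scaling{k}}\Covariance{k}{\DataIn\DataIn}{}^{-1}$ and $\det(\scaling{k}\Covariance{k}{\DataIn\DataIn}{}) = \scaling{k}\det(\Covariance{k}{\DataIn\DataIn}{})$, the closed form~\eqref{eq:newPDE} evaluated at $\DataIn_{\text{des}}$ becomes
\begin{equation*}
	\gauss(\DataIn_{\text{des}} | \Mean{k}{\DataIn}{}, \scaling{k}\Covariance{k}{\DataIn\DataIn}{})
	=
	\frac{1}{\sqrt{\scaling{k}}\,\denominator_{k}}
	\exp\!\left( \frac{1}{\scaling{k}}\,\expArg_{k} \right)
	\text{,}
\end{equation*}
where $\expArg_{k} < 0$ strictly, since for $k \neq \IndexGaussian{}$ the input means do not coincide with the constraint, $\Mean{k}{\DataIn}{} \neq \DataIn_{\text{des}}$. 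Substituting $u = 1/\scaling{k}$ recasts the right-hand side as $\tfrac{\sqrt{u}}{\denominator_{k}}\exp(u\,\expArg_{k})$, whence $\gauss(\DataIn_{\text{des}} | \Mean{k}{\DataIn}{}, \scaling{k}\Covariance{k}{\DataIn\DataIn}{}) \to 0$ as $\scaling{k} \to 0^{+}$.

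Since there are only finitely many components, I can then choose each $\scaling{k}$ (for $k \neq \IndexGaussian{}$) small enough that its input-PDF falls below any prescribed fraction of the fixed value $1/\denominator_{\IndexGaussian{}}$, which yields the simultaneous domination required to invoke Lemma~\ref{lemma:condition_activation_ratio} and completes the argument. The main obstacle is precisely the limit in the previous paragraph: one must verify that the exponential decay $\exp(u\,\expArg_{k})$ outpaces the $\sqrt{u}$ blow-up of the normalization constant as $u \to \infty$, so that the competing mass genuinely vanishes rather than saturating. This is where the strict negativity of $\expArg_{k}$ is indispensable, and hence I would promote the non-coincidence of the input means, $\Mean{k}{\DataIn}{} \neq \DataIn_{\text{des}}$ for $k \neq \IndexGaussian{}$, to an explicit hypothesis of the theorem.
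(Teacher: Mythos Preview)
Your proposal is correct and follows essentially the same route as the paper: reduce to Lemma~\ref{lemma:condition_activation_ratio}, then establish domination of the $\IndexGaussian{}$-th input-PDF at $\DataIn_{\text{des}}$ by scaling the covariances. The only substantive difference is in how that domination is secured. The paper invokes Lemma~\ref{lemma:covariance_scaling}, treating the scaling as a monotonic operation on the activation, whereas you fix $\scaling{\IndexGaussian{}}=1$ and give a direct limit argument showing $\gauss(\DataIn_{\text{des}}\mid\Mean{k}{\DataIn}{},\scaling{k}\Covariance{k}{\DataIn\DataIn}{})\to 0$ as $\scaling{k}\to 0^{+}$ for each competitor.

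Your variant is in fact more careful. The $k\neq\IndexGaussian{}$ branch of Lemma~\ref{lemma:covariance_scaling} asserts that the scaled competing PDF is strictly smaller for \emph{every} $\scaling{k}\in(0,1)$, but this can fail when $|\expArg_{k}|$ is small: for $\scaling{k}$ just below $1$ the prefactor $1/\sqrt{\scaling{k}}$ may outweigh the slight shrinkage of $\exp(\expArg_{k}/\scaling{k})$. Your limiting argument sidesteps this, needing only that exponential decay eventually beats $\sqrt{u}$ growth, which it does precisely when $\expArg_{k}<0$. Hence your decision to elevate $\Mean{k}{\DataIn}{}\neq\DataIn_{\text{des}}$ for $k\neq\IndexGaussian{}$ to an explicit hypothesis is well placed; the paper uses the same assumption tacitly inside the proof of Lemma~\ref{lemma:covariance_scaling}.
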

\begin{proof}
	If  
	$
		\gauss(\DataIn | \Mean{\IndexGaussian{}}{\DataIn}{} , \Covariance{\IndexGaussian{}}{\DataIn\DataIn}{})
		\gg
		\gauss(\DataIn | \Mean{k}{\DataIn}{} , \Covariance{k}{\DataIn\DataIn}{}), \forall k \neq \IndexGaussian{}
	$,
	it holds
	$\ExpWeight{\IndexGaussian{}}{}{} \approx 1$,
	$\ExpWeight{k}{}{} \approx 0, \forall k \neq \IndexGaussian{}$
	(see Lemma~\ref{lemma:condition_activation_ratio}).
	Given $\Mean{\IndexGaussian{}}{\DataOut}{} = \DataOut_{\text{des}}$
	and $\Mean{\IndexGaussian{}}{\DataIn}{} = \DataIn_{\text{des}}$,
	$\gauss(\DataIn | \Mean{\IndexGaussian{}}{\DataIn}{} , \Covariance{\IndexGaussian{}}{\DataIn\DataIn}{})$
	is maximized at $\DataIn = \DataIn_{\text{des}}$
	(see Lemma~\ref{lemma:condition_PDF_maximized}).
	Applying a scaling $\scaling{k} \in (0,1), \forall k \in K$
	to the covariance of a Gaussian
	increases $\gauss(\DataIn_\text{des} | \Mean{\IndexGaussian{}}{\DataIn}{} , \Covariance{\IndexGaussian{}}{\DataIn\DataIn}{})$
	and decreases $\gauss(\DataIn_\text{des} | \Mean{k}{\DataIn}{} , \Covariance{k}{\DataIn\DataIn}{}), \forall k \neq \IndexGaussian{}$,
	thus increases $\ExpWeight{\IndexGaussian{}}{}{}$
	(see Lemma~\ref{lemma:covariance_scaling}).
	Therefore,
	scaling factors $\scaling{k} \in (0,1], \forall k \in K$ exist,
	such that overlapping regions of influence,
	i.e., $\ExpWeight{k}{}{} \gg 0, \exists k \neq \IndexGaussian{}$,
	are removed at $\DataIn = \DataIn_{\text{des}}$
	and the $\IndexGaussian{}$-th component is activated with
	$\ExpWeight{\IndexGaussian{}}{}{} \approx 1$,
	$\ExpWeight{k}{}{} \approx 0, \forall k \neq \IndexGaussian{}$
\end{proof}

\begin{figure}[!t]
	\removelatexerror
	\begin{algorithm}[H]
		\caption{Constrained EM Algorithm~(CEM)}
		\label{alg:ConstrainedEM}
		\textbf{Input:}
		\\
		\hspace{\algorithmicindent}
		initialized model parameters
		$\{ \Prior{k} , \Mean{k}{}{} , \Covariance{k}{}{} \}_{k=1}^{K}$
		\\
		\hspace{\algorithmicindent}
		demonstration data
		$\{ \DataAll{n}{} \}_{n=1}^{N}$
		\\
		\hspace{\algorithmicindent}
		fixed Gaussian's index
		$\IndexGaussian{}$ 
		\\
		\hspace{\algorithmicindent} desired pose~$\DataOut_{\text{des}}$, time~$\DataIn_{\text{des}}$ and threshold~$\epsilon$ 
		\\
		\textbf{Output:}
		\\
		\hspace{\algorithmicindent}
		optimized model parameters
		$\{ \Prior{k} , \Mean{k}{}{} , \Covariance{k}{}{} \}_{k=1}^{K}$
		\\
		\While(){not converged}
		{
			\For(\hfill\algComment{E-step}){$n \gets 1$ to $N$}
			{        
				\For(){$k \gets 1$ to $K$}
				{
					compute responsibility $\Responsibility{k}{n}$~\eqref{eq:GMM_eStepManifold} \\
				}   
			}
			\For(\hfill\algComment{M-step}){$k \gets 1$ to $K$}
			{
				compute prior probability $\Prior{k}$~\eqref{eq:GMM_mStepPrior} \\
				\uIf(\hfill\algComment{fixed mean}){$k$ is $\IndexGaussian{}$}
				{
					set $\Mean{\IndexGaussian{}}{\DataIn}{} = \DataIn_{\text{des}}$ \\
					set $\Mean{\IndexGaussian{}}{\DataOut}{} = \DataOut_{\text{des}}$ \\
				}
				\Else
				{
					compute mean $\Mean{k}{}{}$~\eqref{eq:GMM_mStepManifoldMean} \\
				}
				compute covariance $\Covariance{k}{}{}$~\eqref{eq:GMM_mStepManifoldCovariance} \\
			} 
		}
		enforce $1 \approx \ExpWeight{\IndexGaussian{}}{}{}$~\eqref{eq:CovarianceOptimization}
		\hfill\algComment{covariance adjustment} \\
	\end{algorithm}
\end{figure}

\textbf{Proposed EM Modification:}
We suggest a Constrained~EM algorithm that includes task knowledge into the learning process,
as summarized in Algorithm~\ref{alg:ConstrainedEM}.
To find an optimal GMM with the desired properties specified in~\eqref{eq:desired_properties},
we propose to modify the \emph{maximization} step of the EM algorithm,
such that the $\IndexGaussian{}$-th mean~$\Mean{\IndexGaussian{}}{}{}$ remains fixed in each iteration
with $\Mean{\IndexGaussian{}}{\DataOut}{} = \DataOut_{\text{des}}$ and $\Mean{\IndexGaussian{}}{\DataIn}{} = \DataIn_{\text{des}}$.
Note that the property of the non-decreasing likelihood measure in each EM iteration is not affected by this modification,
see Theorem \ref{theorem:non_decreasing_fixed_mean} for details.

After the EM has converged,
we compute and apply scaling factors $\scaling{k} \in (0,1], \forall k \in K$
for each covariance matrix\footnote{
	We scale the full covariance matrix $\Covariance{k}{}{}$,
	as scaling $\Covariance{k}{\DataIn\DataIn}{}$
	would eventually violate the SPD matrix property of $\Covariance{k}{}{}$, refer to Sylvester's criterion.
	All statements above remain valid.
}.
This modifies their activations~$\ExpWeight{k}{}{}$
such that $\ExpWeight{\IndexGaussian{}}{}{} \geq 1- (K-1) \epsilon$, $\ExpWeight{k}{}{} \leq \epsilon, \forall k \neq \IndexGaussian{}$,
considering a small predefined threshold~$\epsilon$ for numerical reasons.
Appropriate scaling factors $\scaling{k}$ are obtained by solving an optimization problem as follows
\begin{equation}
	\label{eq:CovarianceOptimization}
	\begin{aligned}
		&\!\Max_{\scaling{1}, \cdots, \scaling{K}} & & \sum_{n=1}^{N} \log \sum_{k=1}^{K} \Prior{k} \gauss(\DataAll{n}{} | \Mean{k}{}{} , \scaling{k} \Covariance{k}{}{})
		\\
		&\text{s.t.} & \quad &
		\left\{ 
		\frac
		{
			\Prior{k}
			\gauss( 
			\Mean{\IndexGaussian{}}{\DataIn}{} |
			\Mean{k}{\DataIn}{} ,
			\scaling{k} \Covariance{k}{\DataIn \DataIn}{}
			)
		}{
			\sum_{i=1}^{K}
			\Prior{i}
			\gauss( 
			\Mean{\IndexGaussian{}}{\DataIn}{} |
			\Mean{i}{\DataIn}{} ,
			\scaling{i} \Covariance{i}{\DataIn \DataIn}{}
			)
		}
		\leq
		\epsilon
		\right\}_{\forall k \neq \IndexGaussian{}}^{}
		\\
		&                  & &
		0 < \scaling{k} \leq 1 \quad \forall k \in K
	\end{aligned}
\end{equation}

\textbf{Proposed Initialization:}
Before executing the CEM, we need to specify the $\IndexGaussian{}$-th component that enforces the TSC.
We sort the demonstration data based on timestamps and split the samples into $K$ equal bins.
Based on these assignments,
initial means and covariances are computed.
The most suitable $\IndexGaussian{}$-th component is then identified through 
\begin{equation}
	\label{eq:IdentifyClosestComponent}
	\IndexGaussian{}
	=
	\Argmin_{k} \left( \{ \norm{ \Mean{k}{\DataOut}{} - \DataOut_{\text{des}} } \}_{k=1}^{K} \right) 
\end{equation}

\textbf{Remark:}
It is straightforward to extend the described approach for
enforcing multiple desired poses
at multiple given times,
by constraining multiple Gaussian components.


\subsection{Learning a TP-GMM subject to Constraints}
\label{sec:TSCinTPGMM}

We here extend GMM learning with TSC to the TP-GMM framework.
Due to the task-parameterized projection of data into local perspectives~$\left\{ \DataSet{f} \right\}_{f=1}^{F}$,
it is particularly well suited to enforce multiple desired poses~${}^{f}\DataOut_{\text{des}}$ 
at given times~$\DataIn_{\text{des}, f}$.

\textbf{Proposed Extension to Local Frames:}
To enforce the desired pose~${}^{f}\DataOut_{\text{des}}$
in the $f$-th local perspective,
we identify the most suitable Gaussian component by evaluating
$
	\IndexGaussian{f}
	=
	\Argmin_{k} \left( \{ \norm{ \Mean{k}{\DataOut}{f} - {}^{f}\DataOut_{\text{des}} } \}_{k=1}^{K} \right) 
$
similar to~\eqref{eq:IdentifyClosestComponent}.
Then, means~$\Mean{\IndexGaussian{f}}{}{f}$ are fixed
with $\Mean{\IndexGaussian{f}}{\DataOut}{f} = {}^{f}_{}\DataOut_{\text{des}}$
and $\Mean{\IndexGaussian{f}}{\DataIn}{f} = \DataIn_{\text{des}, f}$
when learning local GMMs for each perspective using the CEM algorithm,
thereby ensuring $\ExpMean{\IndexGaussian{f}}{}{f} = {}^{f}_{}\DataOut_{\text{des}}$ in local frames
for the input~$\DataIn = \DataIn_{\text{des}, f}$.
	
\textbf{Proposed Fusion:}
We aim to enforce TSC in the global frame,
and therefore adapt the fusion of local predictions 
by introducing a variable weighting term~$\weighting_{f,n} \inR$ in the Product of Gaussians.
Equations~\eqref{eq:TPGMM_ManifoldProductOfGaussiansSumCovariance} and~\eqref{eq:TPGMM_ManifoldProductOfGaussiansSumMean} become
\begin{subequations}
	\begin{gather}
		\label{eq:TPGMM_ManifoldProductOfGaussiansSumCovarianceWeighted}
		\CovariancePT{n}{\Sigma}{}
		= 
		\sum_{f=1}^{F} \left( \weighting_{f,n} \,\, \CovariancePT{n}{}{f}^{-1} \right)
		\\
		\label{eq:TPGMM_ManifoldProductOfGaussiansSumMeanWeighted}
		\DataTangent{n}{\Sigma}{}
		=
		\sum_{f=1}^{F} \left( \weighting_{f,n} \,\, \CovariancePT{n}{}{f}^{-1} \logmap{\MeanGlobal{n}{}{f}}{\MeanGlobal{n}{}{}} \right)
	\end{gather}
\end{subequations}
which adapts the importance of each perspective differently.

\textbf{Proposed Weighting:}
Each local mean $\Mean{\IndexGaussian{f}}{}{f}$
enforces a desired pose ${}^{f}\DataOut_{\text{des}}$
at a desired time $\DataIn_{\text{des}, f}$.
To identify the corresponding inputs $\DataIn_{\IndexInput{f}} \in \{ \DataIn_{1} , \cdots , \DataIn_{T} \}$
for each local perspective,
we evaluate
$
	\IndexInput{f}
	=
	\Argmin_{n} ( \{ \norm{ \DataIn_{n} - \Mean{\IndexGaussian{f}}{\DataIn}{f} } \}_{n=1}^{T} ) 
$.
For each of these inputs $\DataIn_{\IndexInput{f}}$,
only one perspective
has to be responsible for generating the global prediction
to keep the desired properties,
i.e., for each $f \in \{ 1 , \cdots , F \}$ it has to hold $\weighting_{f,\IndexInput{f}} = 1$ and $\weighting_{i,\IndexInput{f}} = 0 , \forall i \neq f$.
Therefore, we choose weightings $\weighting_{f,n} \in [0,1]$ based on
\begin{equation}
	\label{eq:WeightingLinear}
	\weighting_{f,n}
	=
	\begin{cases}
		\max \left( 0, \frac{
			n-\IndexInput{f-1}
		}{
			\IndexInput{f}-\IndexInput{f-1}
		} \right) 
		& \text{if } n < \IndexInput{f}
		\\
		1
		& \text{if } n = \IndexInput{f}
		\\
		\max \left( 0, 1 - \frac{
			n-\IndexInput{f}
		}{
			\IndexInput{f+1}-\IndexInput{f}
		} \right) 
		& \text{if } n > \IndexInput{f}
	\end{cases}
\end{equation}
which utilizes a linear transition between two perspectives.

\textbf{Remark:}
If a $f$-th local perspective
for any $f \in \{ 1 , \cdots , F \}$
is not used to enforce a desired pose,
standard EM is utilized to learn its GMM.
The closest component to the perspective's origin is then identified through
$
	\IndexGaussian{f}
	=
	\Argmin_{k} \left( \{ \norm{ \Mean{k}{\DataOut}{f} } \}_{k=1}^{K} \right)
$
and
$
	\IndexInput{f}
	=
	\Mean{\IndexGaussian{f}}{\DataIn}{f}
$
is utilized to compute weightings $\weighting_{f,n}$.
This also applies to the incorporation of obstacle frames.

	\section{Experiments}
\label{sec:Experiments}

This section evaluates the proposed approach on handwritten data and real robot experiments using a KUKA LBR iiwa robot.
For the real robot experiments,
demonstrations were recorded for the end-effector utilizing forward kinematics at 30 Hz using hand-guiding, 
and their duration was normalized through data pre-processing (in our case 60 seconds length).
Learned end-effector trajectories are executed utilizing a classical QP-controller~\cite{Bouyarmane_2019} implemented within KUKA FRI
taking into account hardware limits and manipulator dynamics.
As a baseline, we chose the GMM (resp. TP-GMM) approach with standard EM, described in Sec.~\ref{sec:Fundamentals},
and Bagging-GMM.


\subsection{Simulated Evaluation on Handwritten Data}
\label{sec:ExperimentHandwritten}

\begin{figure}
	\centering
	\includegraphics[width=0.99\linewidth]{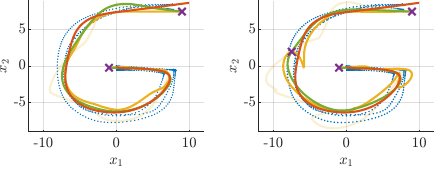}
	\caption{
		Comparison of standard EM/GMR (red), Bagging-GMM/GMR (yellow) and the proposed CEM/GMR approach (green)
		for three handwritten demonstrations (blue)
		with two (left) and three (right) desired states (purple).
	}
	\label{fig:ExperimentBaggingComparison}
\end{figure}

First, we evaluate the constrained GMM approach on 2D handwritten data,
i.e., $\DataAll{}{} = \begin{bmatrix} t , x_1 , x_2 \end{bmatrix}^T$,
from the LASA data set~\cite{Khansari_SEDS_2011}.
Results are compared to
standard EM~\cite{Ghahramani_SupervisedLearningFromIncompleteDataViaAnEmApproach_1993}
and Bagging-GMM~\cite{Ye_BaggingForGaussianMixtureRegressionInRobotLearningFromDemonstration}
in Fig.~\ref{fig:ExperimentBaggingComparison},
where we aim for two and three desired states during reproduction, respectively.
While standard EM is unconstrained,
Bagging-GMM and the proposed CEM approach allows to enforce TSC.
Each model was defined with $K = 6$ Gaussians.
For the Bagging-GMM approach,
we trained ten base learners on different subsets of the demonstration data,
which we obtained from random permutations.
Even though the desired states are reached precisely using Bagging-GMM,
the generated motion might result in large reproduction errors, 
which depends on the randomization of data subsets.
Using CEM,
on the other hand,
the constraints are enforced within the learning process,
resulting in an accurate reproduction of the demonstration data.


\subsection{Grasping Task with constrained GMM}
\label{sec:ExperimentGMM}

\begin{figure}
	\centering
	\includegraphics[width=0.99\linewidth]{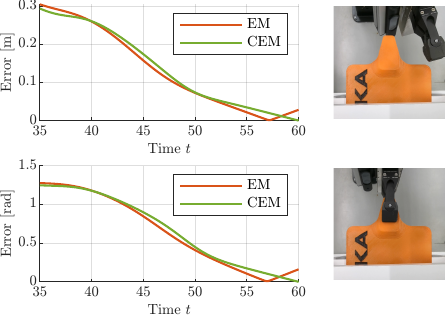}
	\caption{
		Position (top) and axis-angle (bottom) errors from GMR without (red) and with (green) TSC are shown on the left.
		Images on the right show the final state without (top) and with (bottom) enforcing TSC during learning.
	}
	\label{fig:ExperimentGrasp}
\end{figure}

Next, we evaluate the constrained GMM approach in a grasping task.
In all four demonstrations, the robot was taught to approach a chopping board from the side and grasp it at its handle.
The demonstrations include different start and target poses that encode the task variance.
As is common in related works, the data is projected onto the frame of the target pose before learning.
A GMM with $K = 5$ Gaussians is trained,
and the last component,
i.e., $\IndexGaussian{} = 5$,
enforces the TSC.
During motion execution, reference poses obtained from GMR are converted back into the global frame.

Fig.~\ref{fig:ExperimentGrasp} compares the results with the baseline.
At the end of the trajectory, the baseline approach deviates from its desired target in both position and orientation,
resulting in a state that is unable to grasp the chopping board.
Instead, the model learned with TSC generates a trajectory that stops precisely at the desired target pose,
enabling the robot to grasp the board successfully.


\begin{figure} 
	\centering
	\includegraphics[width=0.99\linewidth]{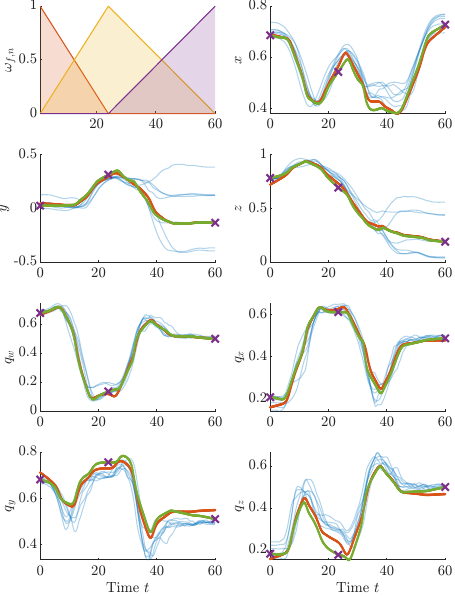}
	\caption{
		Comparison of
		enforcing TSC during learning (green) 
		and the baseline TP-GMM approach (red)
		when learning a robot end-effector motion from demonstrations (blue)
		for the inspection task.
		Desired task-parameters defined at run-time are marked in purple
		and output states are labeled as
		$\DataOut = \begin{bmatrix} x , y , z , q_w , q_x , q_y , q_z \end{bmatrix}^T$.
		Weightings $\weighting_{f,n}$ for each perspective used during fusion are visualized at the top left.
	}
	\label{fig:ExperimentInspection}
\end{figure}

\subsection{Inspection Task with constrained TP-GMM}
\label{sec:ExperimentTPGMM}

Finally, we evaluate a task-parameterized inspection task shown in Fig.~\ref{fig:InspectionTaskSetup}
that consists of
(i)~lifting a chopping board out of the rack on top of the shelf,
(ii)~presenting it to a camera, and
(iii)~inserting it into the shelf.
Seven demonstrations are provided,
and task-parameters for $F=3$ frames representing the start, target and camera inspection pose are defined.

The TP-GMM is trained with $K = 9$ Gaussians,
where components $\IndexGaussian{1} = 1$, $\IndexGaussian{2} = 4$ and $\IndexGaussian{3} = 9$
enforce the pose of the task-parameters,
i.e., the origin in each local perspective,
at given times.
Fig. \ref{fig:ExperimentInspection} compares the results with and without TSC.
As intended,
only one local perspective is responsible for generating the global prediction at the desired times.
The constrained trajectory succeeds in reaching all three desired poses when learning the model with the CEM algorithm.

	\section{Conclusion}
\label{sec:Concolusion}

In this letter, we propose to enforce constraints within the Expectation-Maximization algorithm,
by fixing some of the model parameters to predetermined values in the \emph{maximization}-step.
Afterwards,
a covariance scaling is applied to enforce exclusive activation of specific components at desired time instances.
Utilizing the novel CEM algorithm enables us to guarantee time-sensitive constraints in probabilistic imitation learning.
Contrary to previous works,
these constraints are considered within the learning process,
allowing other model parameters to adapt.
Furthermore, we extended the Task-Parameterized Gaussian Mixture Model framework,
addressing one of its main challenges through CEM.
By carefully adjusting the fusion of local models, 
resulting trajectories accurately generalize to new scenarios.
Real-robot experiments with KUKA iiwa in a realistic setting validate our approach.

	\section*{Acknowledgments}
	This work was partly supported by KUKA Deutschland GmbH, the state of Bavaria through the OPERA project DIK-2107-0004/DIK0374/01 and FCT (2020.07160.BD), through IDMEC, under LAETA, project UIDB/50022/2020. 

	\appendix[Convergence of the CEM Algorithm]
\label{sec:ProofCEM}

\begin{theorem}
	\label{theorem:non_decreasing_fixed_mean}
	When fixing a mean~$\Mean{k}{}{}$ within the EM algorithm,
	the likelihood remains non-decreasing in each iteration.
\end{theorem}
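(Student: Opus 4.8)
The plan is to recover the classical EM monotonicity argument and to show that fixing a mean merely restricts the feasible set of the maximization step without breaking the key inequality. First I would recall the standard decomposition of the log-likelihood through the auxiliary function
\begin{equation}
	Q(\ModelParam{} \mid \ModelParam{\text{old}})
	=
	\sum_{n=1}^{N} \sum_{k=1}^{K} \Responsibility{k}{n}
	\left[ \log \Prior{k} + \log \gauss(\DataAll{n}{} \mid \Mean{k}{}{}, \Covariance{k}{}{}) \right]
\end{equation}
where the responsibilities $\Responsibility{k}{n}$ are those computed in the E-step from $\ModelParam{\text{old}}$. Since the E-step is untouched by our modification, Jensen's inequality (equivalently, non-negativity of the KL divergence between the latent posteriors at $\ModelParam{\text{old}}$ and at $\ModelParam{\text{new}}$) still yields
\begin{equation}
	\log \likelihood(\DataSet{} \mid \ModelParam{\text{new}}) - \log \likelihood(\DataSet{} \mid \ModelParam{\text{old}})
	\geq
	Q(\ModelParam{\text{new}} \mid \ModelParam{\text{old}}) - Q(\ModelParam{\text{old}} \mid \ModelParam{\text{old}})
\end{equation}
so it suffices to establish $Q(\ModelParam{\text{new}} \mid \ModelParam{\text{old}}) \geq Q(\ModelParam{\text{old}} \mid \ModelParam{\text{old}})$.

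Next I would exploit the additive separability of $Q$: the priors decouple from the Gaussian parameters and are maximized under $\sum_{k} \Prior{k} = 1$ by~\eqref{eq:GMM_mStepPrior}, while the summand for each component depends only on its own $\Mean{k}{}{}$ and $\Covariance{k}{}{}$. For every $k \neq \IndexGaussian{}$ the updates~\eqref{eq:GMM_mStepManifoldMean} and~\eqref{eq:GMM_mStepManifoldCovariance} are the unconstrained per-component maximizers, exactly as in standard EM. For the constrained component the mean is held at $\Mean{\IndexGaussian{}}{\DataIn}{} = \DataIn_{\text{des}}$, $\Mean{\IndexGaussian{}}{\DataOut}{} = \DataOut_{\text{des}}$, and the covariance formula~\eqref{eq:GMM_mStepManifoldCovariance} evaluated at this fixed mean is the maximizer of the remaining concave problem in $\Covariance{\IndexGaussian{}}{}{}$. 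Hence the modified M-step returns the exact maximizer of $Q(\cdot \mid \ModelParam{\text{old}})$ over the restricted feasible set $\mathcal{F} = \{ \ModelParam{} : \Mean{\IndexGaussian{}}{}{} = [\DataIn_{\text{des}}, \DataOut_{\text{des}}]^{T} \}$.

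The crucial observation I would then make is that $\ModelParam{\text{old}}$ itself lies in $\mathcal{F}$, because the fixed mean is set to the \emph{same} predetermined value in every iteration, so the previous iterate already satisfies the constraint. Since $\ModelParam{\text{new}}$ maximizes $Q$ over $\mathcal{F}$ and $\ModelParam{\text{old}} \in \mathcal{F}$, we obtain $Q(\ModelParam{\text{new}} \mid \ModelParam{\text{old}}) \geq Q(\ModelParam{\text{old}} \mid \ModelParam{\text{old}})$, which combined with the Jensen bound proves monotonicity.

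The main obstacle I anticipate is verifying carefully that the constrained M-step truly returns the maximizer over $\mathcal{F}$ rather than merely increasing $Q$: one must confirm that substituting the fixed mean into~\eqref{eq:GMM_mStepManifoldCovariance} gives the conditional maximizer of the per-component objective given that mean, and that on the $S^{d}$-sphere the Gauss-Newton iterations for the free components converge to the Riemannian weighted mean, so that the separability and concavity arguments transfer from the tangent-space formulation. Once feasibility of $\ModelParam{\text{old}}$ and the restricted-maximization property are secured, the non-decreasing likelihood follows immediately.
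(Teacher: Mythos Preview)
Your argument is correct and is in fact the standard, rigorous way to establish monotonicity for a constrained EM variant: introduce the auxiliary $Q$-function, invoke the Jensen/KL bound for the unchanged E-step, and then observe that the modified M-step maximizes $Q$ over the affine slice $\mathcal{F}=\{\ModelParam{}:\Mean{\IndexGaussian{}}{}{}=[\DataIn_{\text{des}},\DataOut_{\text{des}}]^{T}\}$, which contains the previous iterate. The paper's own proof pursues the same conclusion but by a considerably less formal route: it writes out the partial derivative $\partial\log\likelihood/\partial\Mean{k}{}{}$, identifies the posterior~\eqref{eq:LatentPosteriorProbability} as the quantity frozen in the E-step, and then argues verbally that ``each updated mean~$\Mean{k}{}{}$ contributes to this increase if it was not at the local maximum already,'' so that leaving one mean untouched simply forfeits one non-negative contribution. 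Your decomposition via $Q$ makes explicit what the paper leaves implicit---in particular the feasibility of $\ModelParam{\text{old}}\in\mathcal{F}$ and the fact that~\eqref{eq:GMM_mStepManifoldCovariance} evaluated at the fixed mean is the conditional maximizer in $\Covariance{\IndexGaussian{}}{}{}$---and thereby closes the gap between intuition and proof. The only caveat worth flagging is the very first iteration: if the initialization does not already place $\Mean{\IndexGaussian{}}{}{}$ at the prescribed value, $\ModelParam{\text{old}}\notin\mathcal{F}$ and your inequality $Q(\ModelParam{\text{new}}\mid\ModelParam{\text{old}})\geq Q(\ModelParam{\text{old}}\mid\ModelParam{\text{old}})$ is not guaranteed for that one step; this is harmless in practice (and the paper's initialization can trivially be amended), but you should state the assumption.
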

\begin{proof}
	In the context of learning a GMM, 
	we assume the existence of latent variables~$\{ \latent{k} \}_{k=1}^{K}$,
	which cannot be observed from the data directly,
	i.e., $\prob(\DataAll{}{}) = \sum_{k=1}^{K} \prob(\DataAll{}{}, \latent{k})$.
	The probability of $\prob(\DataAll{}{}, \latent{})$ is then factorized
	into latent probabilities~$\prob(\latent{k})$
	and conditional probabilities~$\prob(\DataAll{}{} | \latent{k})$.
	In a mixture model,
	these latent variables correspond to the assignments of a data sample to a specific Gaussian component,
	i.e., the prior probability~$\Prior{k}$,
	while the conditional probability is modeled by a Gaussian distribution~$\gauss(\DataAll{}{} | \Mean{k}{}{}, \Covariance{k}{}{})$.
	This leads to
	\begin{equation}
			\label{eq:LatentVariableModel}
			\prob(\DataAll{}{})
			=
			\sum_{k=1}^{K} \prob(\latent{k}) \prob(\DataAll{}{} | \latent{k})
			= 
			\sum_{k=1}^{K} \Prior{k} \gauss(\DataAll{}{} | \Mean{k}{}{}, \Covariance{k}{}{})
	\end{equation}

	To find model parameters~$\ModelParam{} = \{ \Prior{k} , \Mean{k}{}{} , \Covariance{k}{}{}\}_{k=1}^{K}$
	for a given set of data~$\DataSet{}$,
	the likelihood that the observed data was generated by the model is maximized.
	For this purpose,
	the log-likelihood function given in~\eqref{eq:LikelihoodFunction}
	is differentiated with respect to each model parameter.
	For a mean~$\Mean{k}{}{}$ this yields
	\begin{equation}
		\begin{aligned}
			\label{eq:MeanDerivative}
			&\frac{\partial \log \likelihood(\DataSet{} | \ModelParam{})}{\partial \Mean{k}{}{}}
			=
			\frac{\partial}{\partial \Mean{k}{}{}}
			\sum_{n=1}^{N}
			\log \sum_{k=1}^{K} \Prior{k} \gauss(\DataAll{n}{} | \Mean{k}{}{}, \Covariance{k}{}{})
			\\
			&\phantom{===}=
			\sum_{n=1}^{N}
			\frac{\partial}{\partial \Mean{k}{}{}}
			\frac{
				\sum_{k=1}^{K} \Prior{k} \gauss(\DataAll{n}{} | \Mean{k}{}{}, \Covariance{k}{}{})
			}{
				\sum_{i=1}^{K} \Prior{i} \gauss(\DataAll{n}{} | \Mean{i}{}{}, \Covariance{i}{}{})
			}
			\\
			&\phantom{===}=
			\sum_{n=1}^{N}
			\frac{
				\Prior{k} \gauss(\DataAll{n}{} | \Mean{k}{}{}, \Covariance{k}{}{})
			}{
				\sum_{i=1}^{K} \Prior{i} \gauss(\DataAll{n}{} | \Mean{i}{}{}, \Covariance{i}{}{})
			}
			(\DataAll{n}{} - \Mean{k}{}{})^{T} \Covariance{k}{}{}^{-1}
		\end{aligned}
	\end{equation}

	However, solving~\eqref{eq:MeanDerivative} for mean~$\Mean{k}{}{}$ is not possible.
	Instead, given the current estimate of model parameters $\ModelParam{}$,
	the posterior probability
	\begin{equation}
		\begin{aligned}
			\label{eq:LatentPosteriorProbability}
			\prob(\latent{k} | \DataAll{}{})
			=
			\frac{
				\prob(\latent{k}) \prob(\DataAll{}{} | \latent{k})
			}{
				\prob(\DataAll{}{})
			}
			=
			\frac{
				\Prior{k} \gauss(\DataAll{}{} | \Mean{k}{}{}, \Covariance{k}{}{})
			}{
				\sum_{i=1}^{K} \Prior{i} \gauss(\DataAll{}{} | \Mean{i}{}{}, \Covariance{i}{}{})
			}
		\end{aligned}
	\end{equation}
	is computed in the \emph{expectation} step,
	which describes that a sample~$\DataAll{}{}$ belongs to a specific cluster.
	By fixing these posterior probabilities,
	each iteration locally maximizes the likelihood
	based on the current estimate of the model parameters.
	Each updated mean~$\Mean{k}{}{}$ contributes to this increase if it was not at the local maximum already.
	Therefore, even if a mean is fixed during learning,
	the likelihood remains non-decreasing in each iteration.
\end{proof}

	\bibliographystyle{ieeetr}
	\bibliography{biblio}

	\vfill
	
\end{document}